\documentclass{article}
\usepackage[preprint]{neurips_2026}
\usepackage{microtype}
\usepackage{graphicx}
\usepackage{subcaption}
\usepackage{booktabs} 

\usepackage[utf8]{inputenc} 
\usepackage[T1]{fontenc}    
\usepackage{hyperref}       
\usepackage{url}            
\usepackage{booktabs}       
\usepackage{amsfonts}       
\usepackage{nicefrac}       
\usepackage{microtype}      
\usepackage{xcolor}         
\usepackage{tikz}

\usepackage{amsmath}
\usepackage{amssymb}
\usepackage{mathtools}
\usepackage{amsthm}
\usepackage{bm,bbm}
\usepackage{float}
\usepackage{balance}
\usepackage{algorithm2e}
\SetAlCapSkip{7pt}
\usepackage{multirow} 
\usepackage{colortbl}
\usepackage[capitalize,noabbrev]{cleveref}
\usepackage{makecell}
\usepackage{array}

\usepackage{booktabs}
\usepackage{multirow}
\usepackage{colortbl}
\usepackage{graphicx}
\usepackage{wrapfig}

\definecolor{coldlight}{RGB}{222,235,247}
\definecolor{coldmid}{RGB}{158,202,225}
\definecolor{colddark}{RGB}{49,130,189}

\definecolor{warmlight}{RGB}{254,224,210}
\definecolor{warmmid}{RGB}{252,146,114}
\definecolor{warmdark}{RGB}{222,45,38}

\title{Fine-Tuning Improves Information Conveyance in Language Models}

\author{%
  Yuwei Cheng\thanks{Equal contribution.}\\
    Department of Statistics\\
  University of Chicago\\
  Chicago, IL 60637 \\
  \texttt{yuweicheng@uchicago.edu} \\
  \And
  Weiyi Tian\footnotemark[1] \\
  Department of Data Science \\
  University of Chicago \\
  Chicago, IL 60637 \\
  \texttt{weiyitian@uchicago.edu}\\
  \And
  Haifeng Xu \\
  Department of Computer Science \\
  University of Chicago \\
  Chicago, IL 60637 \\
  \texttt{haifengxu@uchicago.edu}
}
\begin{document}
\maketitle

\begin{abstract}
Fine-tuning is often believed to reduce uncertainty and diversity in large language models, but existing analyses overlook output length, a key confounder, and therefore fail to capture how uncertainty is distributed across an entire generation rollout. To address this, we propose \emph{Canopy Entropy} ($\mathrm{CE}^\star$), a measure that views language generation from a tree perspective, where ``canopy'' represents the space of all possible rollouts, making $\mathrm{CE}^\star$ naturally quantify the \emph{effective size of the generation space}. $\mathrm{CE}^\star$ jointly captures uncertainty in both the output length $N$ and the generated sequence $Y_{1:N}$ -- indeed, we show that it equals to total Shannon entropy $H(N, Y_{1:N}\mid X)$, where $X$ denotes the prompt. This formulation yields interpretable metrics, including a length--entropy correlation term $\rho(N, r_N)$, where $r_N$ is the entropy rate, quantifying \emph{information conveyance efficiency} by indicating whether longer outputs are more or less informative per token. Empirically, across tasks and model families, we find that fine-tuned models consistently exhibit stronger positive correlation $\rho(N, r_N)$, even when total entropy decreases. Furthermore, after controlling for model family, task, prompt, and output-length effects, we find that fine-tuning nearly \emph{triples} the correlation strength between entropy rate and semantic diversity, suggesting that aligned models convert {token uncertainty into semantic diversity more efficiently.} 
Overall, these results demonstrate that fine-tuning does not simply reduce uncertainty, but fundamentally reorganizes it into more informative and semantically meaningful generations. Our code is available at \url{https://github.com/WeiyiTian/canopy-entropy}.
\end{abstract}

\def \bp{\mathbb{p}}
\def \y{\mathrm{y}}
\def \p{\mathrm{p}}

\ifx\theorem\undefined
\newtheorem{theorem}{Theorem}
\newtheorem{definition}{Definition}
\newtheorem{assumption}{Assumption}
\newtheorem{corollary}{Corollary}
\newtheorem{remark}{Remark}[section]
\fi
\ifx\example\undefined
\newtheorem{example}[theorem]{Example}
\fi
\ifx\property\undefined
\newtheorem{property}{Property}
\fi
\ifx\lemma\undefined
\newtheorem{lemma}{Lemma}
\fi
\ifx\proposition\undefined
\newtheorem{proposition}{Proposition}
\fi
\providecommand*{\definitionautorefname}{Definition}
\providecommand*{\propositionautorefname}{Proposition}
\providecommand*{\propositionautorefname}{Corollary}
\providecommand*{\propositionautorefname}{Lemma}
\newcommand{\VE}[1]{\textcolor{magenta}{#1}}
\section{Introduction}
Large language models (LLMs) have achieved strong performance across tasks such as natural language understanding \citep{brown2020language, hendrycks2020measuring}, code generation \citep{chen2021evaluating, roziere2023code}, and mathematical reasoning \citep{ouyang2022training, lu2025fine}. A key driver of this success is fine-tuning, including instruction tuning \citep{wei2021finetuned, chung2024scaling} and alignment with human preferences \citep{ouyang2022training, rafailov2023direct}. While fine-tuning substantially improves helpfulness and task performance, a growing body of work suggests that it may also reduce  output diversity and compresses the effective generation space of language models \citep{wang2025optimizing, yang2025llm, lake2025distributional, west2025base}. Prior studies report that base models exhibit greater randomness and creativity, whereas aligned models produce more concentrated and similar responses across samples \citep{west2025base}. Similar conclusions have been drawn using branching factor \citep{yang2025llm}, lexical diversity \citep{lake2025distributional}, and token-level uncertainty measures \citep{agarwal2025unreasonable, wang2026entropy}, reinforcing the view that fine-tuning makes language models more deterministic and less diverse.

However, existing analyses suffer from an important limitation--they do not control for output length, despite length being a major confounder in diversity measurement \citep{shaib2024standardizing}. Classical metrics such as type-token ratio are biased by text length, often making shorter responses appear more diverse \citep{shaib2024standardizing}. Such metrics that ignore generation length cannot distinguish genuinely diverse outputs from statistical artifacts induced by shorter outputs. This limitation is especially problematic in aligned language models because fine-tuning changes not only what models generate, but also how long they generate. Base models often produce shorter responses, while instruction-tuned models generate longer and more structured outputs \citep{singhal2023long}. As illustrated in \autoref{fig:entropy_rate}, base models typically exhibit decreasing entropy rates over generation, implying that longer continuations become increasingly predictable and redundant. In contrast, fine-tuned models tend to maintain more stable entropy rates across longer trajectories, suggesting that additional tokens remain informative rather than collapsing into low-information continuation.

More importantly, prior work largely treats diversity as a static property of outputs, without studying how uncertainty evolves along generation trajectories. To the best of our knowledge, no existing work analyzes the relationship between generation length and entropy rate (see \autoref{def:BF})---that is, whether longer generations become more informative or increasingly redundant. Yet this interaction is crucial, since fine-tuning learns not only \emph{what} to say, but also \emph{when} to elaborate and \emph{when} to stop. As a result, output length itself becomes part of the learned generation strategy, making diversity comparisons incomplete without accounting for how uncertainty is allocated across trajectories.

Meanwhile, recent studies suggest a more nuanced picture than simple diversity reduction. Fine-tuning can increase certain forms of diversity, such as semantic diversity in code generation, even while reducing lexical variation \citep{shypuladoes}. These findings suggest that fine-tuning may reorganize uncertainty and generation structure, rather than uniformly suppress diversity, leaving the relationship between uncertainty allocation, semantic diversity, and generation trajectories still poorly understood.

\textbf{Our contribution.}
In this work, we revisit diversity in language models through a principled information-theoretic lens. We propose \emph{Canopy Entropy} ($\mathrm{CE}^\star$), an information-theoretic measure that views language generation from a tree perspective. Just as a biological canopy represents the total spread and reach of a tree's branches, our metric views the ``canopy'' as the set of all potential leaves—or finished rollouts—available to the model. This allows $\mathrm{CE}^\star$ to naturally quantify the effective volume of the generation space by accounting for both how \emph{widely} the model branches and how \emph{deeply} it explores. $\mathrm{CE}^\star$ is introduced to jointly capture  output length and content uncertainty, and we show that it admits an exact characterization as the total Shannon entropy $H(N, Y_{1:N}\mid X)$, where $X$ denotes the prompt, $N$ is the generated length, and $Y_{1:N}$ is the generated sequence. Building on this formulation, we derive interpretable metrics and introduce a length--entropy correlation term that measures how uncertainty is allocated across generation trajectories (see \autoref{sec:method}). {Since $\mathrm{CE}^\star$ exhausts the entire space of rollouts, a key challenge is to estimate it efficiently. Towards that end,}
we develop Monte Carlo estimators and establish their consistency (see \autoref{sec:estimator}).

Empirically, we evaluate multiple model families across sentence completion, mathematical reasoning, coding, and story generation, while explicitly controlling for output length. Our results reveal three consistent findings: \textbf{(i)} fine-tuning generally reduces token-level and trajectory-level uncertainty, although the magnitude of reduction is highly task-dependent; \textbf{(ii)} fine-tuning systematically shifts {$\rho(N, r_N)$} from negative toward positive, indicating that longer generations become more informative rather than {being redundant as often in base models;}
and \textbf{(iii)} entropy rate becomes a substantially stronger predictor of semantic diversity after fine-tuning, suggesting that aligned models utilize uncertainty more efficiently and translate it more effectively into meaningful semantic diversity (see \autoref{sec:results}).  Overall, our results suggest that fine-tuning does not simply reduce uncertainty or shrink the generation space. Rather, it restructures how uncertainty is allocated across generation trajectories, producing outputs that are more organized, semantically meaningful, and information-efficient.

\section{Related work}\label{sec:related_work}

\textbf{Length as a fundamental confounder in diversity measurement.} Recent work \citep{singhal2023long} shows that RLHF and instruction tuning systematically increase LLM output length, since longer responses are often positively correlated with human preference scores. This makes output length a critical confounder in diversity evaluation. Metrics that favor shorter generations may therefore unfairly penalize instruct models simply because they produce longer responses. For example, n-gram diversity metrics \citep{li2016diversity} naturally decrease with sequence length, as longer texts inevitably reuse a finite vocabulary, reducing the ratio of unique n-grams even when the underlying content remains semantically diverse. Embedding-based semantic diversity measures \citep{tevet2021evaluating} suffer from a related issue: longer generation trajectories must be compressed into fixed-dimensional representations, often through mean pooling, which can dilute semantic diversity. Moreover, many existing diversity metrics rely on heuristic sampling procedures and lack a unified probabilistic foundation \citep{west2025base}. Consequently, current approaches cannot rigorously distinguish whether the increased verbosity of instruct models reflects genuine information gain or merely redundant continuation, motivating the need for a principled length-aware and model-intrinsic measure of generation diversity.

\textbf{From local branching to global canopy.} Conceptualizing autoregressive generation as a stochastic, path-dependent branching tree, Branching Factor (BF) \citep{yang2025llm} was introduced to quantify the probability concentration induced by model alignment. However, their framework lacks a formal heuristic and focuses primarily on the \emph{width} of the generation tree (branching) while ignoring its \emph{depth} (length) and the coupling between the two. $\mathrm{CE}^\star$ addresses this gap by   providing a model-intrinsic characterization of the generation space by jointly modeling content uncertainty and stochastic stopping behavior. By deriving the BF as a normalized version of $\mathrm{CE}^\star$, we provide a principled theoretical foundation for its use. Crucially, our length-uncertainty correlation demonstrates that while fine-tuning reduces the absolute BF, it restructures the generation tree to sustain information density across longer paths while translating uncertainty into semantic diversity more efficiently. To the best of our knowledge, this is the first work to provide a unified measure of the global LLM generation space that accounts for both the local branching uncertainty and the stochasticity of trajectory length, while also studying the redistribution of uncertainty across generation trajectories.

\section{Methodology}\label{sec:method}
{\textbf{Notation convention. } For any discrete random variable $Z \in $ with probability $q(z)$ for each $z \in \mathcal Z$, its Shannon entropy \citep{shannon1948mathematical} is $H(Z) = \sum_{z \in \mathcal Z } q(z) \log q(z)$. Let  $X$ be another random variable and $X, Z$ have joint probability $q(z,x)$, then $H(Z|X=x) = \sum_{z \in \mathcal Z } q(z|x) \log q(z|x)$ is the conditional entropy of $Z$ given realized $x$, whereas $H(Z|X) = \sum_{x} H(Z|x) q(x)$ is the expected conditional entropy of $Z$. }
\subsection{Measuring the size of LLM generation space from a tree perspective}
\vspace{-4pt}
\begin{figure}[t]
    \centering
    \includegraphics[width=\linewidth]{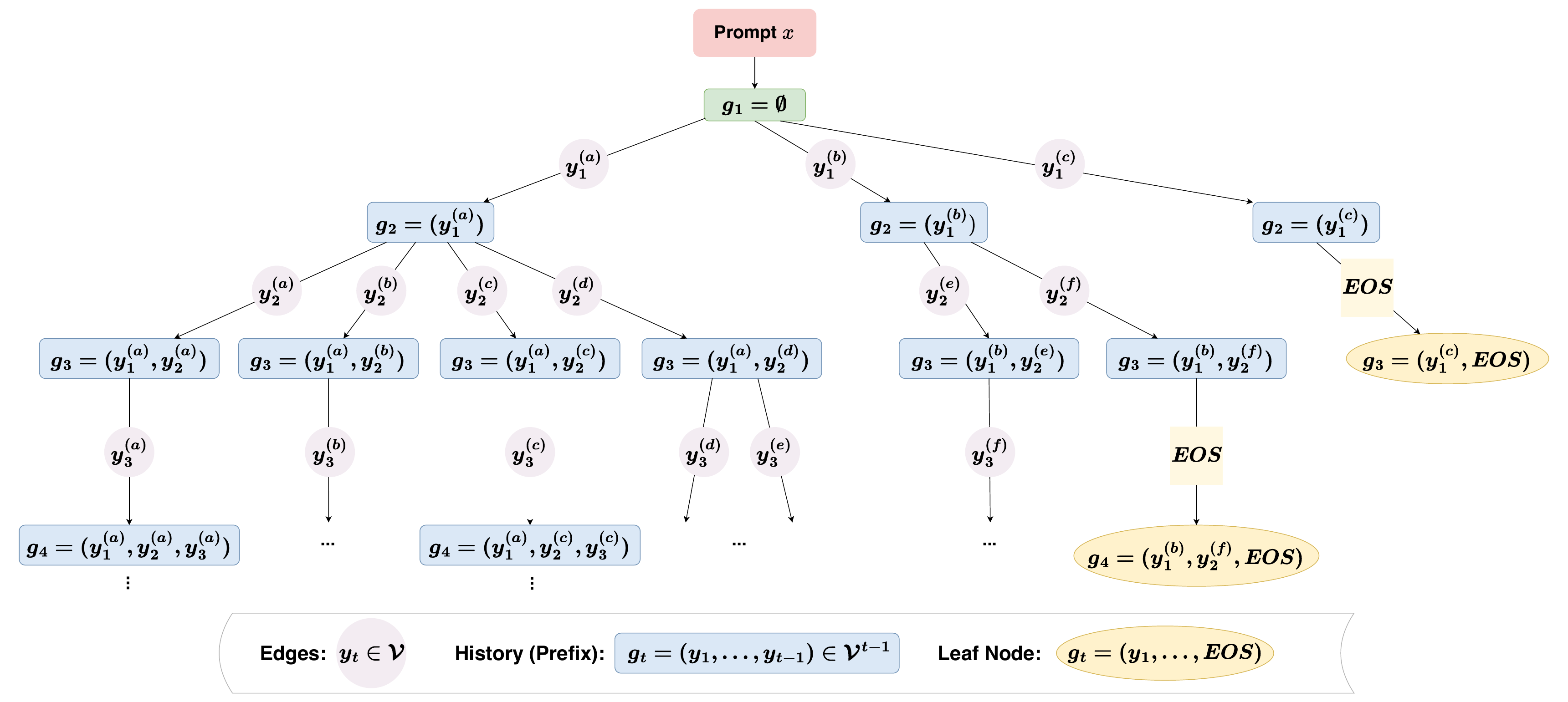}
    \caption{Path-dependent generation tree induced by autoregressive LLMs.}
    \label{fig:generation_tree}
\end{figure}
\textbf{Autoregressive generation as a stochastic tree.}
Autoregressive language models induce a stochastic generation process that can naturally be represented as a rooted, path-dependent tree. Let $X \sim p_X$ denote a prompt sampled from a prompt distribution over a space $\mathcal X$, and let $\mathcal V$ denote the vocabulary. Conditioned on $X=x$ and a decoding policy (e.g., temperature, top-$k$, or nucleus sampling), generation proceeds sequentially by sampling tokens from a conditional distribution. Formally, let
\(
Y_1,Y_2,\ldots,
\)
denote the autoregressive token process generated by the model. For each step $t \ge 1$, define the generation history as
\(
G_t := (Y_1,\ldots,Y_{t-1}),
\)
with $G_1 := \emptyset$. The next-step random variable is denoted by
\(
Z_t \in \mathcal Z := \mathcal V \cup \{\texttt{EOS}\},
\)
where $\texttt{EOS}$ denotes termination. The conditional distribution
\(
q(Z_t \mid G_t, X)
\)
governs the next-step generation process.\footnote{This formulation is equivalent to a two-stage stochastic decision process Specifically, the model first makes a binary decision between \textsf{STOP} (terminate) and \textsf{CONT} (continue). Conditional on \textsf{CONT}, it then samples the next token from the vocabulary $\mathcal V$. We provide a formal proof of this equivalence in \autoref{app:two_stage_equivalence}.} Under this construction, each node of the generation tree (see \autoref{fig:generation_tree}) corresponds to a history $G_t$. Each outgoing edge corresponds to a possible realization of $Z_t$. A complete generated sequence therefore corresponds to a root-to-leaf trajectory. A key feature of this construction is its path dependence. The next-step distribution $q$ depends on the entire generation history $G_t$, not merely the most recent token. Thus, two trajectories ending with the same token may evolve very differently if they arise from different histories.

\textbf{Width and depth of the generation tree.} The size of the generation tree is determined by two complementary factors. First, at each history $G_t$, the spread of $q(\cdot \mid G_t, X)$ determines the number of plausible continuations, capturing the local branching behavior (\emph{width}) of the tree. Second, generation proceeds for a random number of steps before termination, inducing variability in trajectory depth. We define the stopping time
\(
N := \inf\{t \ge 1 : Z_t = \texttt{EOS}\},
\)
which represents the generation length (\emph{depth}). Under this formulation, the stochastic process jointly models both continuation uncertainty and stopping behavior, together determining the overall size of the generation space.

\textbf{A unified measure of generation space.} To quantify local uncertainty, we use Shannon entropy \citep{shannon1948mathematical}. For a given prompt $X = x$ and a realized generation history $G_t = g_t$, define
\begin{equation}\label{eq:shorthand}
    \widetilde H(g_t \mid x)
:=
H(Z_t \mid G_t=g_t, X=x)
= 
-\sum_{z \in \mathcal Z}
q(z \mid g_t, x)\log q(z \mid g_t, x).
\end{equation}
This quantity measures uncertainty via entropy in the immediate next generation step, including both continuation-token selection and termination. {Note that this quantity summarizes only the uncertainty of the next-step generation conditioned on a realized history $g_t$ and prompt $x$}. {In the following, we slightly overload the notation by also using uppercase $G_t$ and $X$ to denote realized histories and prompts whenever the distinction between random variables and their realizations is clear from context.} To jointly account for local branching diversity and random trajectory length, we introduce the following uncertainty measure.
{
\begin{definition}[Canopy Entropy]\label{def:ce_star}
We define \emph{Canopy Entropy} as
\(
\mathrm{CE}^{\star}
:=
\mathbb{E}
\left[
\sum_{t=1}^{N(x)}
\widetilde H(g_t \mid x)
\right]
\).
More formally,
$$
\mathrm{CE}^{\star}
=
\mathbb{E}_{x, \{g_t\}_{t=1}^{N(x)} }
\left[
\sum_{t=1}^{N(x)}
H(Z_t \mid G_t = g_t, X = x)
\right]
$$ 
where the $\mathbb{E}$ is over the randomness of prompt $x$, $N(x)$ (the length), and  $\{g_t\}_{t=1}^{N(x)}$ (all possible partial histories of rollouts). 
\end{definition}
}
That is, the quantity $\mathrm{CE}^{\star}$ aggregates entropy-based uncertainty along generation trajectories and across prompts. A single generated sequence corresponds to a specific traversal from the tree root (prompt $X$) to a leaf (\texttt{EOS} termination). The summation $\sum_{t=1}^{N(X)}$ thus represents the total uncertainty accumulated along that entire path. By taking the expectation over all trajectories, $\mathrm{CE}^\star$ captures the aggregate ``canopy'' of the model's output distribution across all possible rollouts. As such, it provides a unified measure of generation space by accounting for both \emph{local branching width} (the diversity of choices at each node) and \emph{trajectory depth} (the variability in generation length).

\textbf{\boldsymbol{$\mathrm{CE}^\star$} as the joint entropy of length and content.}
Besides being an intuitive measure,  above construction turns out to also admit a clean information-theoretic characterization. In particular, the cumulative uncertainty along a trajectory can be interpreted as the total entropy of the generation process. For ease of reference, we call $H(N, Y_{1:N}|X)$ the \emph{total uncertainty}, $H(N|X)$ the \emph{length uncertainty}, and $H(Y_{1:N} \mid N, X)$ the \emph{content uncertainty}. The following theorem formalizes the connection.

\begin{theorem}[Equivalence of $\mathrm{CE}^\star$ and  total uncertainty]\label{thm:entropy}
$\mathrm{CE}^\star = H(N, Y_{1:N}\mid X).$
\end{theorem}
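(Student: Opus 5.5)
The plan is to establish the identity prompt by prompt and then average over $X\sim p_X$. Fix $X=x$. The first observation is that the pair $(N, Y_{1:N})$ is a deterministic, invertible function of the full step sequence $(Z_1,\dots,Z_N)$, where $Z_N=\texttt{EOS}$ and $Z_t=Y_t\in\mathcal V$ for $t<N$. Conversely, the step sequence is recovered from $(N,Y_{1:N})$; here I will follow the paper's convention, under which $Y_{1:N}$ records the non-\texttt{EOS} tokens. Since entropy is invariant under bijections of the outcome space, $H(N,Y_{1:N}\mid X=x)$ equals the entropy of the random root-to-leaf path $Z_{1:N}$ in the generation tree. Each leaf $\omega=(z_1,\dots,z_n)$ has probability $\mathbb P(\omega\mid x)=\prod_{t=1}^{n} q(z_t\mid g_t,x)$.

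The core step is the chain rule along the tree. Taking logs of the path probability gives
\[
-\log \mathbb P(\omega\mid x)=\sum_{t=1}^{N}\bigl(-\log q(Z_t\mid G_t,x)\bigr).
\]
Taking expectations, $H(Z_{1:N}\mid x)=\mathbb E\bigl[\sum_{t=1}^N -\log q(Z_t\mid G_t,x)\bigr]$. It remains to replace each summand by its conditional expectation given the history, namely $\widetilde H(G_t\mid x)$. To do this I will rewrite the random sum as $\sum_{t\ge1}\mathbf 1\{N\ge t\}\,(-\log q(Z_t\mid G_t,x))$. The event $\{N\ge t\}$ is determined by $G_t$: it says no \texttt{EOS} occurred among $Z_1,\dots,Z_{t-1}$. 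Conditioning on $G_t$ therefore gives
\[
\mathbb E\bigl[\mathbf 1\{N\ge t\}(-\log q(Z_t\mid G_t,x))\bigr]=\mathbb E\bigl[\mathbf 1\{N\ge t\}\,\widetilde H(G_t\mid x)\bigr].
\]
Summing over $t$ yields $\mathbb E\bigl[\sum_{t=1}^{N}\widetilde H(G_t\mid x)\bigr]$, the inner quantity of Definition~\ref{def:ce_star}. Equivalently, one can argue by induction on the tree: $H(\text{subtree at }g)=\widetilde H(g\mid x)+\sum_{z\neq\texttt{EOS}}q(z\mid g,x)\,H(\text{subtree at }g z)$.

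Finally, averaging over $x\sim p_X$ gives $\mathrm{CE}^\star=\sum_x p_X(x)H(N,Y_{1:N}\mid X=x)=H(N,Y_{1:N}\mid X)$ by the definition of conditional entropy in the notation paragraph. (The sign convention there should be read as the usual $-\sum q\log q$.)

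The main obstacle is the interchange of the infinite sum over $t$ with expectation, since $N$ is unbounded. Every term $\mathbf 1\{N\ge t\}\widetilde H(G_t\mid x)$ and $-\log q$ is nonnegative, so Tonelli's theorem justifies the interchange in $[0,\infty]$. The identity then holds even when both sides are infinite. I will assume $N<\infty$ almost surely so that the leaves carry full mass and the path entropy is well defined; for practical LLMs this holds because of a maximum-length cap.
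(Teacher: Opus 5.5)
Your proof is correct and follows essentially the same route as the paper. Both identify $(N,Y_{1:N})$ bijectively with the step sequence, apply the chain rule along the generation tree so that each step contributes the expected local entropy $\widetilde H(G_t\mid x)$, and use Tonelli (nonnegativity) to exchange the sum over $t$ with the expectation. The only difference is technical: the paper pads with \texttt{EOS} after $N$ and invokes the chain rule for the infinite sequence $(Z_1,Z_2,\ldots)$, a limiting step it leaves implicit, whereas your pathwise surprisal decomposition with the $G_t$-measurable indicator $\mathbf 1\{N\ge t\}$ makes that step self-contained.
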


Theorem~\ref{thm:entropy} (see \autoref{app:proof_thm1} for a detailed proof) shows that the Canopy Entropy coincides with the total uncertainty of the generation process. By the chain rule of entropy, $H(N, Y_{1:N} \mid X) = H(N \mid X) + H(Y_{1:N} \mid N, X),$ which decomposes the total uncertainty into two components. The \emph{length uncertainty} $H(N\mid X)$ captures how unpredictable the stopping time is, while the \emph{content uncertainty} $H(Y_{1:N} \mid N, X)$ captures how diverse the generated sequences are once the length is fixed. This decomposition aligns naturally with the tree perspective: length uncertainty corresponds to variability in the depth of trajectories, while content uncertainty corresponds to the diversity of paths of a given length. We can interpret $\exp(\mathrm{CE}^\star)$ as the effective number of plausible full sequences, analogous to how perplexity represents the effective branching at the token level. 
 
\subsection{{Two normalized variants of Canopy Entropy}}
 
While $\mathrm{CE}^{\star} = H(N, Y_{1:N}\mid X)$ provides a clean characterization of the generation space, it aggregates uncertainty across both sequence lengths and content diversity, making its direct interpretation difficult. To obtain more interpretable metrics, we introduce two normalized variants of $\mathrm{CE}^\star$, respectively  coined \emph{Generation Perplexity} and \emph{Branching Factor} (as introduced in \citep{yang2025llm}). {Intuitively, the former corresponds to the token-level normalization of $\mathrm{CE}^\star$, whereas the latter is trajectory-level normalization. In both definitions, the $\exp$ function converts entropy uncertainty to bit diversity.} 

\begin{definition}[GenPPL]\label{def:GenPPL}
Generation Perplexity is defined as 
\(
\mathrm{GenPPL}
 :=
\exp\left(
\frac{
\sum_{X \sim p_X}\,
H(N,Y_{1:N}\mid X=x)
}{
\sum_{X \sim p_X}\,
\mathbb E[N\mid X=x]
}
\right).
\)
\end{definition}

\begin{definition}[BF, \citep{yang2025llm}]\label{def:BF}
Branching Factor is defined as
\(
\mathrm{BF}
:=
\exp\left(
\mathbb E_X
\mathbb E_N
[r_N\mid X=x]
\right),
\)
where
\(
r_N
:=
\frac{
H(Y_{1:N}\mid N=n,X=x)
}{n}
\)
is the entropy rate (i.e., per-token content uncertainty).\footnote{In information theory, the entropy rate quantifies the average amount of information produced per generated token by a stochastic process, and can be interpreted as a measure of information density or non-redundancy.} 
\end{definition}

{Invoking Theorem \ref{thm:entropy}, we observe that the only difference  between Canopy Entropy $\mathrm{CE}^\star$ and GenPPL or BF is that -- excluding the ``$\exp$'' function --  $\mathrm{CE}^\star$ does not have the normalization term $1/(\sum_{X \sim p_X}\,
\mathbb E[N\mid X=x])$ or $1/n$. It is worthwhile to think deeper about the difference between GenPPL and BF, due to their different normalization factors. Such difference will be prominent when the generated sequence length $N$ could differ a lot. In such case, GenPPL   \emph{treats each token equally}, regardless of it is from the short and long sequence, through a ``global'' normalization factor $1/(\sum_{X \sim p_X}\,
\mathbb E[N\mid X=x])$. However, BF as the expectation of $r_N$ \emph{treats each trajectory equally} and consequently will discount the influence  of tokens in long  sequences much more than that in the short sequence. Therefore, at a high level, GenPPL reflects the diversity of a \emph{typical token} in the generation whereas BF reflects the diversity of a \emph{typical trajectory}. For further intuition, we refer readers to \autoref{app:example} for a concrete example illustrating the distinction between GenPPL and BF.}
 
GenPPL and BF together provide a more complete view of generation behavior across tasks. In particular, we observe that structured tasks such as mathematical reasoning or coding are better characterized by token-level uncertainty (GenPPL), whereas open-ended tasks like story generation are better captured by trajectory-level diversity (BF). Our framework provides a principled formulation of GenPPL as a generative analogue of perplexity \citep{jelinek1977perplexity} for variable-length processes, and offers a unified information-theoretic interpretation of BF, initially proposed by Yang et.al \citep{yang2025llm}, showing that both arise naturally as normalized variants of Canopy Entropy. See Appendix~\ref{app:bf_genppl_usage} for guidance on when to use BF or GenPPL, and Appendix~\ref{app:comparison_perplexity} for a comparison with perplexity.

\subsection{The length-entropy correlation: how entropy evolves with generation length}
Aggregate measures such as Canopy Entropy, GenPPL, and BF quantify the overall magnitude of uncertainty. However, they fail to capture how uncertainty evolves across generation rollouts or how effectively uncertainty is translated into meaningful variation as generation length increases. Whether longer generations remain information-rich or become progressively redundant \citep{singhal2023long, shaib2024standardizing} remains poorly understood despite rich previous studies on the diversity of language generation \citep{tevet2021evaluating, li2016diversity, friedman2022vendi}. Understanding this interaction is crucial for characterizing the \emph{information-conveying efficiency} of language models---namely, how much uncertainty is maintained per token as generation progresses.

To study this phenomenon, we introduce the prompt-controlled  correlation $\rho(N, r_N)$ between the length $N$ and entropy rate $r_N$, defined below. 
{
\begin{definition}[The length-entropy correlation]
\label{def:correlation} For each prompt, the length-entropy correlation is defined as 
\begin{equation*}
\rho(N,r_N) = \mathbb{E}_{X \sim p_X} \bigg\{\texttt{Corr}(N, r_N)\mid X = x\bigg\}, \text{ where } r_N
 =
\frac{
H(Y_{1:N}\mid N=n,X=x)
}{n}   
\end{equation*}
is the entropy rate and \texttt{Corr} is any standard correlation notion from the set \{Pearson correlation, Spearman correlation, Kendrall correlation\}.  
\end{definition} 
A positive correlation $\rho(N,r_N)$ indicates that longer sequences remain informative per token, while a negative correlation indicates that longer outputs become increasingly predictable. This quantity characterize uncertainty allocation along trajectories and is later used in \autoref{sec:results} to compare base and fine-tuned models across tasks and model families.  In our evaluation, we observe consistent trends across the three standard correlation notions as mentioned in the definition above, which illustrates the robustness of the discovered correlations among generation length and entropy rate. Notably, the correlation $\rho(N,r_N)$ is evaluated over the randomness of  rollouts drawn from an exponentially large space; we discuss efficient estimation methods in the section below, with full details deferred to   \autoref{app:other_estimators}.  
}
\section{Efficient and consistent estimation of $\mathrm{CE}^\star$ }\label{sec:estimator}
 
The previous section introduced $\mathrm{CE}^\star = H(N, Y_{1:N} \mid X)$ and its derived metrics such as GenPPL, BP and length-entropy correlation.  However, efficiently computing these metrics in practice is challenging since they are expectations over exponentially many trajectories \citep{hu2023amortizing, yang2025llm}. This section develops computationally tractable approaches to estimate $\mathrm{CE}^\star$ {which are also generalizable to estimating the derived metrics}. In particular, we introduce a Monte Carlo estimator under a maximum length constraint $T_{\max}$, and analyze the consistency of this estimator. 

 
For a random prompt \(X\), let \(N(X)\) be the generation's (random) stopping time. Given a maximum length \(T_{\max}\), define the truncated stopping time
$
N_{\max}(X) := \min\{N(X), T_{\max}\}.
$
The corresponding truncated functional is
$
\mathrm{CE}^{\star}_{\max}
=:
\mathbb{E}
\left[
\sum_{t=1}^{N_{\max}(X)}
\widetilde H(G_t \mid X)
\right]$. Now suppose \(x_1, \dots, x_P\) are sampled i.i.d.\ from \(p_X\), the prompt distribution. For each prompt \(x_p\), we generate \(M\) independent rollouts. For the \(i\)-th rollout, define the truncated cumulative local entropy by
$
S_{\max}^{(p,i)}
:=
\sum_{t=1}^{N_{\max}^{(p,i)}}
\widetilde H(G_t^{(p,i)} \mid x_p),
$
which measures the total branching uncertainty along the generated trajectory up to truncation. We then define the Monte Carlo estimator (see \autoref{alg:tmstar_est} for more detail)
$$
\widehat{\mathrm{CE}}^{\star}_{\max,P,M}
:=
\frac{1}{PM}
\sum_{p=1}^P
\sum_{i=1}^M
S_{\max}^{(p,i)}. 
$$ 
{The estimator's convergence and  consistency requires   minor technical assumptions. First, let $g(T_{\max}) = \mathrm{CE}^{\star}-\mathrm{CE}^{\star}_{\max} $ denote the truncation bias. Naturally, it always satisfies $g(T_{\max}) \geq 0$. Convergence requires \textbf{diminishing truncation bias} --- i.e., $g(T_{\max}) \to 0$ as $T_{\max} \to \infty$. Second, in our estimation, $T_{\max}$  increases as parameter $P, M$ increases. Convergence requires \textbf{modest truncation length} --- i.e., the length cap $T_{\max}(P, M)$ is picked such that $\nu^2_{\max}(\frac1P+\frac1{PM})
\to 0$ as $T_{\max}=T_{\max}(P,M)\to\infty$, where $\nu_{\max}^2:=\mathbb{E}\left[N_{\max}(X)^2\right]$ is the second raw moment of the truncated stopping time. Both conditions are easily satisfied when the tail of $N$ is not too heavy, which we observe in our empirical evaluation (see \autoref{fig:sequence_length_over}). The following theorem characterizes the convergence of $\widehat{\mathrm{CE}}^{\star}_{\max,P,M}$ under these technical conditions.
\begin{theorem}[Consistency and non-asymptotic error bounds of the Monte Carlo estimator]\label{thm:unified_tmstar_prompt}
Assume diminishing truncation bias   
\(g(T)  \) and modest truncation length.  
Then $\widehat{\mathrm{CE}}^{\star}_{\max,P,M}$ converges to
$\mathrm{CE}^{\star}$ in probability when $P,M\to\infty$. Moreover,  $\left|
\widehat{\mathrm{CE}}^{\star}_{\max,P,M}
-
\mathrm{CE}^{\star}
\right|
=
O_{\mathbb P}\left(
\nu_{\max}
\sqrt{
\frac1P+\frac1{PM}
}
+
g(T_{\max})
\right).$ where   $\nu_{\max}^2:=\mathbb{E}\left[N_{\max}(X)^2\right].$ 
\end{theorem}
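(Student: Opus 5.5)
We split the error into a stochastic part and a truncation-bias part:
\[
\widehat{\mathrm{CE}}^{\star}_{\max,P,M}-\mathrm{CE}^{\star}
=\bigl(\widehat{\mathrm{CE}}^{\star}_{\max,P,M}-\mathrm{CE}^{\star}_{\max}\bigr)-g(T_{\max}).
\]
The second term is deterministic and, by the diminishing truncation bias assumption, is $O(g(T_{\max}))\to 0$. So everything reduces to showing that the stochastic term is $O_{\mathbb P}\bigl(\nu_{\max}\sqrt{1/P+1/(PM)}\bigr)$.

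\textbf{Unbiasedness.} The $P$ prompts are i.i.d.\ from $p_X$, and the $M$ rollouts for each prompt are generated from the model itself. Hence each $S^{(p,i)}_{\max}$ has the same law as $\sum_{t=1}^{N_{\max}(X)}\widetilde H(G_t\mid X)$, and $\mathbb E[\widehat{\mathrm{CE}}^{\star}_{\max,P,M}]=\mathrm{CE}^{\star}_{\max}$ by linearity.

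\textbf{Variance.} Apply the law of total variance, conditioning on the prompts. Write $\mu(x):=\mathbb E[S_{\max}\mid X=x]$ and $\sigma^2(x):=\mathrm{Var}(S_{\max}\mid X=x)$. Given $x_p$, the $M$ rollouts are i.i.d., and distinct prompts are independent, so
\[
\mathrm{Var}\bigl(\widehat{\mathrm{CE}}^{\star}_{\max,P,M}\bigr)=\frac{1}{P}\mathrm{Var}\bigl(\mu(X)\bigr)+\frac{1}{PM}\mathbb E\bigl[\sigma^2(X)\bigr].
\]

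\textbf{Bounding the summand.} This is the step that produces $\nu_{\max}$, and it is the one to handle carefully. Each local entropy satisfies $0\le \widetilde H(g_t\mid x)\le \log|\mathcal Z|$, since $Z_t$ lives on the finite alphabet $\mathcal V\cup\{\texttt{EOS}\}$. Therefore
\[
0\le S_{\max}\le N_{\max}(X)\log|\mathcal Z|.
\]
From this, $\mathrm{Var}(\mu(X))\le \mathbb E[\mu(X)^2]\le \mathbb E[S_{\max}^2]$ by Jensen, and $\mathbb E[\sigma^2(X)]\le \mathbb E[S_{\max}^2]$. Both are at most $(\log|\mathcal Z|)^2\,\nu_{\max}^2$.

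The subtlety is that $N_{\max}$ is random and correlated with the entropies along the path. This causes no trouble because the bound is pathwise: we never need independence between the length and the local entropies. Having a finite alphabet is what makes this crude pathwise bound available. Treating $\log|\mathcal Z|$ as a constant absorbed into the $O$, we get
\[
\mathrm{Var}\bigl(\widehat{\mathrm{CE}}^{\star}_{\max,P,M}\bigr)\lesssim \nu_{\max}^2\Bigl(\frac1P+\frac1{PM}\Bigr).
\]

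\textbf{Concentration and consistency.} Chebyshev's inequality now gives, for any $\varepsilon>0$, a constant $C_\varepsilon=\log|\mathcal Z|/\sqrt{\varepsilon}$ such that
\[
\mathbb P\Bigl(\bigl|\widehat{\mathrm{CE}}^{\star}_{\max,P,M}-\mathrm{CE}^{\star}_{\max}\bigr|>C_\varepsilon\,\nu_{\max}\sqrt{1/P+1/(PM)}\Bigr)\le\varepsilon.
\]
Combined with the bias term, this gives the stated $O_{\mathbb P}$ rate.

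Consistency then follows. The modest truncation length assumption says $\nu^2_{\max}(1/P+1/(PM))\to 0$ as $T_{\max}(P,M)\to\infty$, so the stochastic term vanishes in probability. The bias $g(T_{\max})\to 0$ by the diminishing truncation bias assumption. Hence $\widehat{\mathrm{CE}}^{\star}_{\max,P,M}\to\mathrm{CE}^{\star}$ in probability as $P,M\to\infty$.

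By Theorem~\ref{thm:entropy}, the limit equals the total uncertainty $H(N,Y_{1:N}\mid X)$.
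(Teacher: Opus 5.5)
Your proposal is correct and follows essentially the same route as the paper's proof. You split off the deterministic truncation bias $g(T_{\max})$, use the pathwise bound $0\le S_{\max}\le N_{\max}\log|\mathcal Z|$ with the law of total variance (conditioning on the prompts) to get $\mathrm{Var}(\widehat{\mathrm{CE}}^{\star}_{\max,P,M})\le (\log|\mathcal Z|)^2\nu_{\max}^2(1/P+1/(PM))$, and finish with Chebyshev; your conditional-Jensen step $\mathbb E[\mu(X)^2]\le\mathbb E[S_{\max}^2]$ justifies the prompt-level term slightly more cleanly than the paper's written version.
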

}

\autoref{thm:unified_tmstar_prompt} shows that the estimator $\widehat{\mathrm{CE}}^\star_{\max, P, M}$ is consistent, with estimation error naturally decomposing into a Monte Carlo error term and a truncation bias term. The Monte Carlo error is of order
\(
O_\mathbb{P}\left(\frac{1}{\sqrt{P}} + \frac{1}{\sqrt{PM}}\right),
\)
arising from sampling $P$ prompts and $M$ rollouts per prompt. 

The second component is a truncation bias $g(T_{\max})$, which captures the bias introduced by truncating trajectories at length $T_{\max}$ instead of allowing full generations. Empirically, we observe that the generation length distribution of LLMs typically exhibits exponential decay across tasks and model architectures (see \autoref{fig:sequence_length_over}). This implies that long trajectories are increasingly rare, so the truncation bias $g(T_{\max})$ decays rapidly with $T_{\max}$. We defer the proof of \autoref{thm:unified_tmstar_prompt} to Appendix~\ref{app:proof_tmstar_prompt}. 

Notably, the same Monte Carlo framework extends to the estimation of GenPPL, BF, and prompt-controlled correlation. Their consistency follows by the continuous mapping theorem \citep{mann1943stochastic} together with mild integrability and truncation-bias conditions, which are empirically supported by the observed low truncation rates and rapidly decaying active-rollout distributions (see Appendix~\ref{app:other_estimators} for detailed proofs and discussions).

\begin{remark}[Connecting  truncation bias to the distribution tail of \(N\)] \label{rem:g_tail_prompt}
The truncation bias can be expressed as 

\begin{equation}\label{eq:truncation_bias}
g(T_{\max}) =\mathrm{CE}^{\star}-\mathrm{CE}^{\star}_{\max} = 
\mathbb{E}\left[
\sum_{t=T_{\max}+1}^{N(X)}
\widetilde H(G_t\mid X)
\mathbf 1\{N(X)>T_{\max}\}
\right].
\end{equation}
Let $H_{\max}$ denote the entropy upper bound, i.e.,  $\widetilde H(G_t\mid X) \leq H_{\max}$ for any $G_t$. Then we have
$
g(T_{\max})
$$\le
 H_{\max}
\mathbb{E}_{X\sim p_X}
\left[
\mathbb{E}\left[
(N(X)-T_{\max})_+
\middle| X
\right]
\right] 
$ 
by the tail-sum formula  
$
\mathbb{E}\left[
(N(X)-T_{\max})_+
\middle| X=x
\right]
=$$
\sum_{t=T_{\max}+1}^{\infty}
\mathbb{P}(N(x)\ge t\mid X=x).
$
Therefore,
$H_{\max}
\mathbb{E}_{X\sim p_X}
\left[
\sum_{t=T_{\max}+1}^{\infty}
\mathbb{P}(N(X)\ge t\mid X)
\right],
$ with $H_{\max}:=\log |\mathcal Z|$ is a natural upper bound of $g(T_{\max})$, showing that the truncation bias is governed by the prompt-averaged tail behavior of the stopping time. We refer readers to \autoref{app:g_examples_prompt} for examples of prompt-averaged bias decay rates.
\end{remark}

\section{Empirical studies}\label{sec:results}
In this section, we present our experimental results, organized into two complementary parts. First, we examine the global uncertainty structure of the LLM generation space using the metrics introduced in \autoref{sec:method}, including $\mathrm{CE}^\star$, GenPPL, BF, and length--entropy correlation. Second, building on this analysis, we investigate how uncertainty is translated into semantic diversity through a regression framework that links entropy rate to semantic diversity. We evaluate three model families—LLaMA-3.1-8B \citep{grattafiori2024llama}, Qwen3-8B \citep{yang2025qwen3}, and Gemma-3-12B \citep{gemmateam2025gemma3technicalreport}—testing both their pre-trained (base) and instruction-tuned (instruct) versions. We evaluate these models on four tasks spanning a spectrum from highly structured settings, including mathematical reasoning \citep{cobbe2021gsm8k}, coding \citep{zhuo2024bigcodebench}, and sentence completion \cite{zellers2019hellaswag}, to more open-ended creative story generation \citep{ismayilzada2025creativepreferenceoptimization}. For each task, we sample $P=100$ prompts, and for each sampled prompt $X_p$, we generate $M=100$ independent rollouts using stochastic decoding with temperature equals $1$ and top-$k=100$ (renormalized next-token probabilities),\footnote{Specifically, the one step local entropy $\tilde{H}(G_t|X)$ is computed as $-\sum_{z \in \mathcal{Z}_{100}} q(z|G_t, X) \log q(z|G_t, X)$, where $\mathcal{Z}_{100}$ is the set of the top-100 tokens (including \texttt{EOS} if present) and $q$ is the renormalized distribution over the sampled subset.} where each rollout produces a sequence $Y_{1:N}$ with stopping time $N$ up to a maximum of 4000 new tokens, i.e. $T_{\max} = 4000$ (see \autoref{app:stopping_criteria} for detailed description). To ensure consistent evaluation across model families, we disable the thinking mode in Qwen3-8B Instruct.

\subsection{Global effects of fine-tuning on uncertainty}
 
Across all model families and tasks, we observe three consistent patterns.

\textbf{Fine-tuning reduces both token-level and trajectory-level normalized uncertainty.}
Instruction fine-tuning consistently reduces both token-level (measured by GenPPL) and trajectory-level (measured by BF) normalized uncertainty, as evidenced by systematic declines in GenPPL and BF across nearly all models and tasks (see \autoref{tab:global_uncertainty}). Nevertheless, the magnitude of this reduction varies substantially across settings. For example, Qwen3-8B on mathematical reasoning exhibits only a mild decrease of roughly $5\%$, whereas many other model--task combinations show much larger reductions, often in the range of $60\%$--$80\%$. These results suggest that fine-tuning contracts the effective generation space at both the token and trajectory levels, producing outputs that are more concentrated and deterministic. At the same time, the heterogeneous magnitude of the reduction suggests that alignment does not impose a uniform compression of uncertainty. Instead, the extent of contraction depends on the underlying task structure and model behavior, a phenomenon we discuss in the following paragraph. Overall, our findings are consistent with prior observations that alignment reduces diversity and increases probability concentration \citep{wang2025optimizing, yang2025llm, lake2025distributional, west2025base}, although the shrinkage we observe is less extreme than the order-of-magnitude reductions reported in the previous study \cite{yang2025llm}.

\begin{table*}[t]
\centering
\small
\setlength{\tabcolsep}{2.2pt}
\renewcommand{\arraystretch}{1.14}
\caption{Task-dependent changes in generation-space uncertainty after instruction tuning across different model families. We report three complementary uncertainty measures: GenPPL, BF, and $\mathrm{CE^{\star}}$ (see \autoref{app:other_estimators}).
For each task and model family, we compare the base and instruct variants and report the relative percentage change after instruction tuning. 
Standard errors (SE) are estimated via prompt-cluster bootstrapping (2K iterations, resampled with replacement) as in \autoref{app:bootstrap}.
Blue cells indicate reductions in uncertainty after instruction tuning, while red cells indicate increases. Darker color intensity corresponds to larger magnitude changes. }
\label{tab:global_uncertainty}
\resizebox{\textwidth}{!}{%
\begin{tabular}{llccccccccc}
\toprule
\multirow{2}{*}{Task} & \multirow{2}{*}{Type}
& \multicolumn{3}{c}{\textbf{GenPPL}}
& \multicolumn{3}{c}{\textbf{BF}}
& \multicolumn{3}{c}{$\mathbf{CE}^{\star}$} \\
\cmidrule(lr){3-5}\cmidrule(lr){6-8}\cmidrule(lr){9-11}
&
& Qwen3-8B & Llama-3.1-8B & Gemma-3-12B
& Qwen3-8B & Llama-3.1-8B & Gemma-3-12B
& Qwen3-8B & Llama-3.1-8B & Gemma-3-12B \\
\midrule

\multirow{3}{*}{\textbf{Math}}
& Base
& 1.23 $\pm$ 0.0052 & 3.57 $\pm$ 0.041 & 4.56 $\pm$ 0.052
& 1.23 $\pm$ 0.0064 & 5.31 $\pm$ 0.12 & 4.98 $\pm$ 0.055
& 55.7 $\pm$ 2.0 & 458 $\pm$ 18 & 487 $\pm$ 11 \\

& Instruct
& 1.17 $\pm$ 0.0067 & 1.74 $\pm$ 0.063 & 1.17 $\pm$ 0.0098
& 1.16 $\pm$ 0.0057 & 1.57 $\pm$ 0.024 & 1.15 $\pm$ 0.0054
& 40.5 $\pm$ 2.5 & 98.7 $\pm$ 9.1 & 59.1 $\pm$ 5.1 \\

& \% Change
& \cellcolor{coldlight} -4.44 $\pm$ 0.47\%
& \cellcolor{coldmid!80} -51.2 $\pm$ 2.0\%
& \cellcolor{colddark!95} -74.4 $\pm$ 0.34\%
& \cellcolor{coldlight} -5.76 $\pm$ 0.38\%
& \cellcolor{colddark!90} -70.5 $\pm$ 0.73\%
& \cellcolor{colddark} -76.9 $\pm$ 0.24\%
& \cellcolor{coldmid} -27.3 $\pm$ 3.3\%
& \cellcolor{colddark!95} -78.4 $\pm$ 2.2\%
& \cellcolor{colddark} -87.9 $\pm$ 1.1\% \\

\midrule

\multirow{3}{*}{\textbf{Coding}}
& Base
& 1.68 $\pm$ 0.017 & 3.16 $\pm$ 0.055 & 2.60 $\pm$ 0.041
& 2.08 $\pm$ 0.034 & 4.96 $\pm$ 0.057 & 3.24 $\pm$ 0.038
& 207 $\pm$ 4.8 & 326 $\pm$ 13 & 492 $\pm$ 25 \\

& Instruct
& 1.23 $\pm$ 0.0053 & 1.31 $\pm$ 0.0088 & 1.05 $\pm$ 0.0039
& 1.21 $\pm$ 0.0061 & 1.31 $\pm$ 0.0081 & 1.05 $\pm$ 0.0021
& 81.2 $\pm$ 4.0 & 135 $\pm$ 4.3 & 17.7 $\pm$ 1.6 \\

& \% Change
& \cellcolor{coldmid} -27.1 $\pm$ 0.79\%
& \cellcolor{colddark!70} -58.4 $\pm$ 0.74\%
& \cellcolor{coldmid!95} -59.7 $\pm$ 0.64\%
& \cellcolor{coldmid!60} -41.9 $\pm$ 0.99\%
& \cellcolor{colddark!90} -73.7 $\pm$ 0.35\%
& \cellcolor{colddark!85} -67.8 $\pm$ 0.38\%
& \cellcolor{colddark!80} -60.7 $\pm$ 1.6\%
& \cellcolor{coldmid!95} -58.6 $\pm$ 1.7\%
& \cellcolor{colddark} -96.4 $\pm$ 0.39\% \\

\midrule

\multirow{3}{*}{\textbf{Sentence Completion}}
& Base
& 4.77 $\pm$ 0.16 & 6.73 $\pm$ 0.12 & 8.81 $\pm$ 0.11
& 4.20 $\pm$ 0.11 & 7.80 $\pm$ 0.15 & 9.49 $\pm$ 0.11
& 826 $\pm$ 55 & 969 $\pm$ 40 & 1084 $\pm$ 36 \\

& Instruct
& 1.68 $\pm$ 0.025 & 2.56 $\pm$ 0.069 & 1.80 $\pm$ 0.019
& 1.55 $\pm$ 0.018 & 2.75 $\pm$ 0.045 & 1.74 $\pm$ 0.024
& 112 $\pm$ 11 & 201 $\pm$ 11 & 515 $\pm$ 29 \\

& \% Change
& \cellcolor{colddark!75} -64.9 $\pm$ 1.1\%
& \cellcolor{colddark!80} -62.0 $\pm$ 1.3\%
& \cellcolor{colddark} -79.6 $\pm$ 0.30\%
& \cellcolor{colddark!75} -63.0 $\pm$ 0.88\%
& \cellcolor{colddark!80} -64.8 $\pm$ 0.83\%
& \cellcolor{colddark} -81.6 $\pm$ 0.30\%
& \cellcolor{colddark} -86.5 $\pm$ 1.0\%
& \cellcolor{colddark!95} -79.3 $\pm$ 1.3\%
& \cellcolor{coldmid} -52.5 $\pm$ 2.8\% \\

\midrule

\multirow{3}{*}{\textbf{Story Generation}}
& Base
& 3.57 $\pm$ 0.11 & 7.40 $\pm$ 0.071 & 7.98 $\pm$ 0.062
& 4.03 $\pm$ 0.092 & 8.24 $\pm$ 0.073 & 8.68 $\pm$ 0.070
& 445 $\pm$ 20 & 1320 $\pm$ 41 & 692 $\pm$ 23 \\

& Instruct
& 2.28 $\pm$ 0.037 & 3.24 $\pm$ 0.092 & 2.22 $\pm$ 0.023
& 2.11 $\pm$ 0.039 & 3.37 $\pm$ 0.083 & 2.17 $\pm$ 0.025
& 454 $\pm$ 28 & 543 $\pm$ 22 & 925 $\pm$ 40 \\

& \% Change
& \cellcolor{coldmid} -36.0 $\pm$ 1.3\%
& \cellcolor{coldmid!90} -56.2 $\pm$ 1.4\%
& \cellcolor{colddark!90} -72.2 $\pm$ 0.28\%
& \cellcolor{coldmid!70} -47.6 $\pm$ 1.3\%
& \cellcolor{colddark!75} -59.1 $\pm$ 1.2\%
& \cellcolor{colddark!95} -75.0 $\pm$ 0.28\%
& \cellcolor{warmlight} 1.92 $\pm$ 3.5\%
& \cellcolor{coldmid!95} -58.9 $\pm$ 1.6\%
& \cellcolor{warmmid} 33.7 $\pm$ 8.3\% \\

\bottomrule
\end{tabular}%
}
\vspace{-5mm}
\end{table*}

\textbf{The effect of fine-turning on canopy entropy is \emph{task-dependent} and could exhibit opposite trends.} 
In structured domains such as math, coding, and sentence completion, $\mathrm{CE}^\star$ decreases significantly after fine-tuning. In contrast, story generation exhibits a different pattern: Qwen3-8B shows a slight increase of roughly $2\%$, while Gemma-3-12B shows a much larger increase of about $34\%$ (see \autoref{tab:global_uncertainty}). This suggests that fine-tuning does not uniformly reduce canopy entropy, but instead behaves differently in constrained versus open-ended generation settings.

For deterministic tasks such as math and coding, the valid generation space is naturally constrained, so fine-tuning acts as a pruning force that collapses the model’s generation space into a small set of correct trajectories. Consequently, both stopping-time variance and local token-level entropy are significantly reduced. By contrast, on creative tasks, fine-tuned models exhibit a notable rightward shift in sequence length distributions (see \autoref{fig:sequence_length_over}), reflecting a tendency toward longer and more elaborative generations. In the same domain, the corresponding base models frequently fall into ``completion traps,'' treating prompts as short text continuation tasks rather than directly answering the underlying questions (see \autoref{app:completion_traps}).

Although instruction tuning reduces local branching diversity, it also increases stopping-time uncertainty, and sustains generation over much deeper trajectories. As illustrated in \autoref{fig:entropy_rate}, while the base model has higher local entropy rates (\textit{width}), it lacks enough trajectory length (\textit{depth}) to accumulate a large Canopy Entropy ($\mathrm{CE}^\star$). Since $\mathrm{CE}^\star$ acts as a measure of the total \textit{volume} of the generation tree by accumulating uncertainty across the entire rollout, including stopping behavior, the increase in trajectory depth can mathematically outweigh the reduction in local branching.

These findings suggest that fine-tuning changes not only the magnitude of uncertainty, but also how uncertainty is distributed across the trajectory, particularly through its interaction with generation length and stopping behavior.

\begin{figure}[t]
    \centering
     \vspace{-8pt}
    \includegraphics[width=\linewidth]{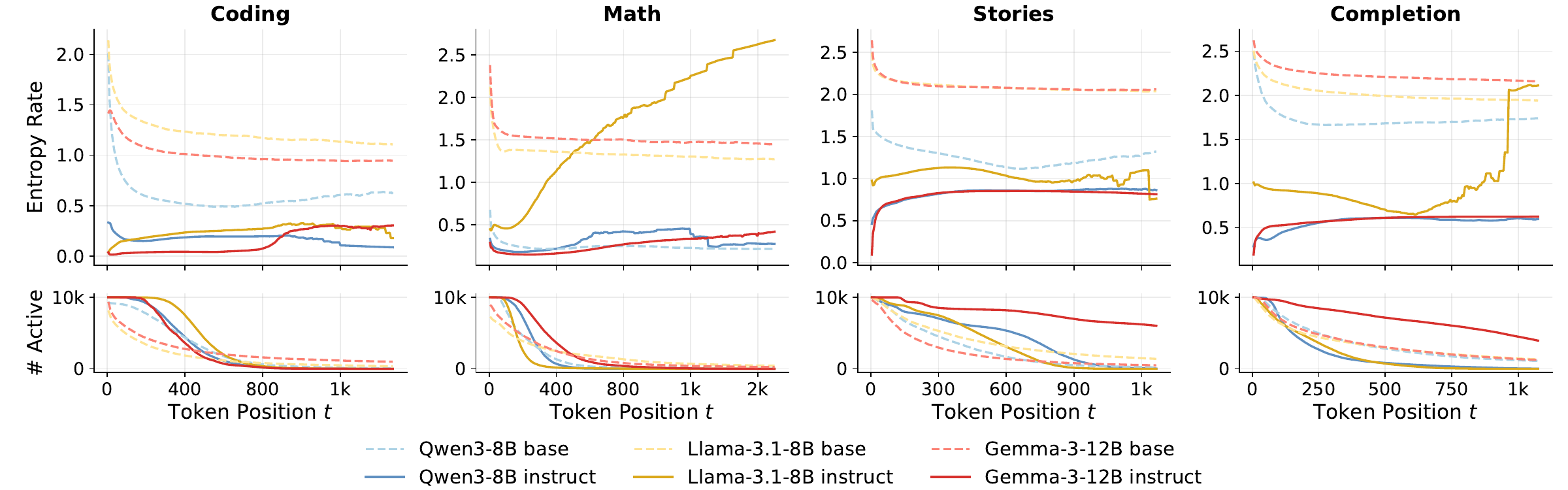}
    \caption{Running entropy rate vs. token position. We plot mean running entropy rate $\bar R_{\le t} = \frac{1}{t}\sum_{s=1}^{t} H(Z_s \mid X, G_{s})$ averaged across active rollouts at position $t$, with the corresponding active rollout count beneath. Dashed curves denote base models and solid curves denote fine-tuned models.}
    \label{fig:entropy_rate}
     \vspace{-8pt}
\end{figure}

\begin{figure}[t]
    \centering
     \vspace{-5pt}
    \includegraphics[width=\linewidth]{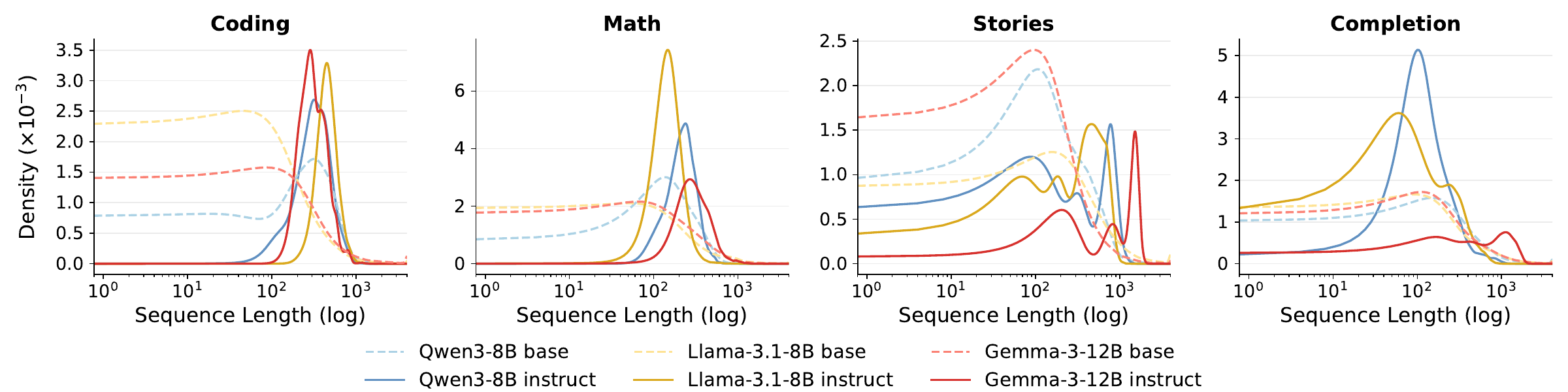}
    \caption{Kernel density estimates of generated sequence lengths across task domains with sequence length shown on a logarithmic scale.}
    \label{fig:sequence_length_over}
    \vspace{-8pt}
\end{figure}

\textbf{Fine-turning consistently increases length--entropy correlation.} To better understand how uncertainty is distributed across trajectories, we analyze the relationship between output length $N$ and entropy rate $r_N$. 
Empirically, fine-tuning consistently shifts the correlation (see \autoref{def:correlation}) between $N$ and $r_N$ from negative toward positive values in structured tasks, while making the correlation less negative in open-ended tasks such as story generation. {\autoref{tab:er_len_r_change} shows the concrete results under Pearson correlation. Similar results based on Spearman correlation and Kendall's $\tau$ exhibit same trends, and are deferred to \autoref{tab:er_len_rho_tau_change} in the appendix to avoid repetition.}   

This reveals a fundamental change in generation behavior. In base models, longer outputs are often associated with lower entropy rates, meaning later tokens become increasingly redundant and contribute little new information. In contrast, fine-tuned models exhibit much stronger positive length--entropy correlation: longer generations tend to sustain higher entropy rates and remain informative throughout the trajectory (see \autoref{fig:entropy_rate}).

The distinction is especially important in the presence of length confounding. As shown in \autoref{fig:sequence_length_over}, fine-tuned models generally produce longer outputs than their base counterparts, consistent with observations in prior work \citep{singhal2023long}. Under conventional diversity metrics that do not explicitly account for length \citep{tevet2021evaluating, li2016diversity}, these longer generations can artificially appear less diverse due to normalization effects or repetition accumulation, potentially biasing diversity comparisons \citep{friedman2022vendi, shaib2024standardizing}. The positive length--entropy correlation observed here demonstrates that increased length in fine-tuned models reflects sustained information content rather than simple redundancy.

Overall, these results suggest that fine-tuning improves not merely the quantity of uncertainty, but also its organization and efficiency. Rather than uniformly suppressing diversity, fine-tuning restructures uncertainty to better align with task requirements—reducing unnecessary randomness while preserving informative variation across longer trajectories.
\vspace{-8pt}

\begin{table*}[t]
\centering
\small
\setlength{\tabcolsep}{1.9pt}
\renewcommand{\arraystretch}{1.10}

\caption{
Base--instruct comparison of the length--entropy Pearson correlation, measured by $\rho(N, r_N)$ (see \autoref{def:correlation}), for the Qwen3-8B, LLaMA-3.1-8B, and Gemma-3-12B model families. Each entry reports the estimate $\pm$ standard error, where the absolute change is computed as the instruct value minus the corresponding base value.
}
\label{tab:er_len_r_change}
\resizebox{\textwidth}{!}{%
\begin{tabular}{llcccccccccccc}
\toprule
\multirow{2}{*}{Metric} &
\multirow{2}{*}{Model}
& \multicolumn{3}{c}{\textbf{MATH}}
& \multicolumn{3}{c}{\textbf{CODING}}
& \multicolumn{3}{c}{\textbf{SENTENCE COMPLETION}}
& \multicolumn{3}{c}{\textbf{STORY GENERATION}} \\
\cmidrule(lr){3-5}
\cmidrule(lr){6-8}
\cmidrule(lr){9-11}
\cmidrule(lr){12-14}
&
& Base & Instruct & Abs. Change
& Base & Instruct & Abs. Change
& Base & Instruct & Abs. Change
& Base & Instruct & Abs. Change \\
\midrule

\multirow{3}{*}{\textbf{$\rho(N, r_N)$}}

& Qwen
& \cellcolor{coldlight!30} -0.018 $\pm$ 0.016
& \cellcolor{warmlight!70} 0.087 $\pm$ 0.037
& \cellcolor{warmlight} 0.11 $\pm$ 0.042

& \cellcolor{colddark!70} -0.41 $\pm$ 0.014
& \cellcolor{warmdark!85} 0.61 $\pm$ 0.034
& \cellcolor{warmdark} 1.0 $\pm$ 0.037

& \cellcolor{warmlight!55} 0.062 $\pm$ 0.017
& \cellcolor{warmdark!70} 0.52 $\pm$ 0.024
& \cellcolor{warmmid} 0.46 $\pm$ 0.027

& \cellcolor{coldmid!75} -0.27 $\pm$ 0.017
& \cellcolor{warmlight!90} 0.16 $\pm$ 0.021
& \cellcolor{warmmid!70} 0.43 $\pm$ 0.027 \\

& Llama
& \cellcolor{coldmid!75} -0.27 $\pm$ 0.016
& \cellcolor{warmmid!85} 0.38 $\pm$ 0.040
& \cellcolor{warmdark!80} 0.65 $\pm$ 0.044

& \cellcolor{colddark!60} -0.38 $\pm$ 0.011
& \cellcolor{warmmid!65} 0.29 $\pm$ 0.021
& \cellcolor{warmdark!85} 0.67 $\pm$ 0.021

& \cellcolor{coldmid!70} -0.26 $\pm$ 0.015
& \cellcolor{coldmid!55} -0.20 $\pm$ 0.024
& \cellcolor{warmlight!55} 0.053 $\pm$ 0.028

& \cellcolor{coldmid!65} -0.24 $\pm$ 0.012
& \cellcolor{coldlight!90} -0.10 $\pm$ 0.020
& \cellcolor{warmlight!90} 0.14 $\pm$ 0.022 \\

& Gemma
& \cellcolor{coldlight!95} -0.13 $\pm$ 0.014
& \cellcolor{warmmid!35} 0.19 $\pm$ 0.040
& \cellcolor{warmmid!50} 0.32 $\pm$ 0.043

& \cellcolor{coldmid!75} -0.27 $\pm$ 0.012
& \cellcolor{warmmid!90} 0.39 $\pm$ 0.035
& \cellcolor{warmdark!80} 0.67 $\pm$ 0.037

& \cellcolor{coldmid!60} -0.23 $\pm$ 0.015
& \cellcolor{warmlight!85} 0.12 $\pm$ 0.038
& \cellcolor{warmmid!60} 0.36 $\pm$ 0.041

& \cellcolor{coldlight!95} -0.17 $\pm$ 0.012
& \cellcolor{coldlight!45} -0.049 $\pm$ 0.027
& \cellcolor{warmlight!90} 0.12 $\pm$ 0.032 \\

\bottomrule
\end{tabular}%
}
\end{table*}

\subsection{Fine-tuning more effectively converts token uncertainty into semantic diversity}
 
After establishing how fine-tuning reshapes entropy allocation, we next investigate how these changes in entropy rate translate into semantic diversity \citep{guo2025benchmarking}. Specifically, we study how generation uncertainty is converted into semantic diversity across sampled trajectories.

For each task--prompt--model tuple $(t,p,m)$, we measure semantic diversity by computing the average pairwise cosine distance between generated outputs:
\(
D_{tpm}
=
\frac{2}{M(M-1)}
\sum_{i<j}
d(e_{t,p,m,i}, e_{t,p,m,j})
\) \citep{tevet2021evaluating},
where $e_{t,p,m,i}$ denotes the embedding of the $i$-th Monte Carlo rollout, prepended with the retrieval prefix (\texttt{"search\_document:"}) to ensure alignment with the encoder's training objective for ModernBERT embedding model (modernbert-embed-large) \citep{modernbert}. Here $d(\cdot,\cdot)$ denotes cosine distance. Since cosine distance is bounded in $[0,1]$, we employ the Beta mixed-effects regression, which is a standard method to accommodate  bounded responses while allowing flexible modeling of dispersion and hierarchical variation \citep{figueroa2013mixed}.\footnote{In contrast, Gaussian linear models are inappropriate for bounded responses and exhibit clear heteroscedasticity and residual misspecification in diagnostic analyses.}

Formally, we model
\(
D_{tpm}
\sim
\operatorname{Beta}
\!\left(
\mu_{tpm}\phi_{tpm},
(1-\mu_{tpm})\phi_{tpm}
\right),
\)
where \(\mu_{tpm}\in[0,1]\) denotes the conditional mean and \(\phi_{tpm}>0\) is the precision parameter. The conditional mean model is specified as
\(\operatorname{logit}(\mu_{tpm})=
\alpha+\beta_1 R_{tpm}
+\beta_2 \mathbf{1}\{m=\mathrm{FT}\}+
f\left(\mathrm{invN}_{tpm}\right)
+
\sum_k
\tau_k \mathbf{1}\{t=k\}
+
\beta_3
(
R_{tpm}
\cdot
\mathbf{1}\{m=\mathrm{FT}\}
)
+
g\left(\mathrm{invN}_{tpm}\right)
\mathbf{1}\{m=\mathrm{FT}\}
+
u_p
+
v_m
\),
where \(R_{tpm}\) is the entropy rate, \(\mathrm{invN}_{tpm}\) is the inverse sequence length, and \(f(\cdot)\), \(g(\cdot)\) are natural spline functions corresponding to nonlinear length effects and their interaction with instruction tuning. {For consistency with the measurement of semantic diversity, the reported sequence length and entropy rate are likewise computed by averaging over the same set of $M$ rollouts for each task--prompt--model combination. The coefficients \(\tau_k\) represent fixed task effects.} To account for hierarchical dependence, we include random intercepts
\(
u_p \sim \mathcal N(0,\sigma_p^2),
v_m \sim \mathcal N(0,\sigma_m^2),
\)
for prompt identity and model family, respectively. To capture heteroscedasticity, we also model the precision parameter $\phi_{tpm}$ as a function of task, model variant, entropy rate, sequence length, and model family. Residual diagnostics (see \autoref{fig:qq_plot}) indicate good calibration of the mean structure (Kolmogorov-Smirnov Test \citep{massey1951kolmogorov}: $p=0.55$) and no significant dispersion issues ($p=0.068$) and no outliers ($p=0.16$).

\textbf{Main findings.} First, entropy rate exhibits a strong and highly significant positive association with semantic diversity. The coefficient on entropy rate is large and significant ($\beta = 0.708$, $p < 2\times 10^{-16}$), indicating that higher per-token uncertainty leads to substantially greater semantic diversity across generated outputs. More importantly, the interaction between entropy rate and fine-tuning is strongly positive ($\eta = 1.183$, $p < 2\times 10^{-16}$). Under the logit link, this implies a substantial amplification of the slope:
\(
\partial \operatorname{logit}(\mu)/\partial R
=
0.708 \; \text{(base)},
\;
0.708 + 1.183 = 1.891\; \text{(fine-tuned)}.
\)
Thus, fine-tuning nearly triples the sensitivity of semantic diversity to entropy rate. This indicates that uncertainty in fine-tuned models is significantly more \emph{semantically productive}: small increases in token-level uncertainty translate into much larger increases in trajectory-level diversity.

At the same time, the main effect of fine-tuning is negative ($\tau = -0.377$, $p < 2\times10^{-5}$), indicating that at fixed entropy rate and length, fine-tuned models produce less baseline semantic diversity. This is consistent with alignment compressing the generation space and enforcing more structured outputs. However, the strong positive interaction shows that the remaining uncertainty is used more efficiently. We observe a nonlinear and generally negative relationship between semantic diversity and sequence length, making length control essential in diversity analysis (see \autoref{tab:beta_glmm_dispersion_interaction}).
\begin{table}[t]
\vspace{-9pt}
\centering
\caption{Beta GLMM with dispersion modeling and interaction effects. For completeness, we direct readers to \autoref{app:beta_regression} for the full conditional mean and dispersion regression tables, along with additional model specifications and regression details.}
\label{tab:beta_glmm_dispersion_interaction}
\small
\begin{tabular}{lccc}
\toprule
\textbf{Conditional model} & \textbf{Estimate} & \textbf{Std. Error} & \textbf{$p$-value} \\
\midrule
Intercept & -0.880 & 0.070 & $< 2\times 10^{-16}$ \\
$R$ & 0.708 & 0.033 & $< 2\times 10^{-16}$ \\
Instruct & -0.377 & 0.086 & $1.23 \times 10^{-5}$ \\
$\mathrm{ns}(\mathrm{invN},4)_1$ & 0.152 & 0.041 & $1.99 \times 10^{-4}$ \\
$R \times$ Instruct & 1.183 & 0.046 & $< 2\times 10^{-16}$ \\
\midrule
\bottomrule
\end{tabular}
\vspace{-9pt}
\end{table}

\section{Conclusion and limitations}\label{sec:conclusion}
 
In this work, we revisit diversity in large language models through a trajectory-level information-theoretic perspective. By introducing Canopy Entropy and length--entropy coupling, we show that fine-tuning does not simply reduce uncertainty or shrink the generation space. Instead, aligned models reorganize uncertainty more efficiently across generation trajectories, producing longer generations that remain more semantically informative. Our results highlight the importance of modeling how uncertainty evolves throughout the rollout process, beyond aggregate diversity metrics alone.

Our work also has several limitations. Entropy-based measures characterize uncertainty allocation, but do not directly capture factuality or human preference quality. Our framework is descriptive rather than causal: although we observe systematic changes in length--entropy coupling after fine-tuning, the underlying   mechanisms causing this effect remain an important direction for future work.

\bibliography{ref}
\bibliographystyle{plain}

\appendix
\newpage
\section{Missing proofs and algorithms}\label{sec:proof}

\subsection{Equivalence to a two-stage stochastic decision process}\label{app:two_stage_equivalence}
\begin{theorem}[Equivalence between autoregressive generation and a two-stage decision process]
\label{thm:two_stage_equivalence}
Fix a prompt \(X=x\) and a generation history \(G_t = g_t=(y_1,\ldots,y_{t-1})\). Suppose the autoregressive model defines a probability distribution over the extended vocabulary $\mathcal \mathcal Z := \mathcal V \cup \{\texttt{EOS}\},$ denoted by $q(z \mid g_t, x), z \in \mathcal \mathcal Z.$ Then this one-step generation rule is equivalent to the following two-stage stochastic decision process: $A_t \in \{\textsf{STOP},\textsf{CONT}\},$ where
$\mathbb P(A_t=\textsf{STOP}\mid g_t,x)=q(\texttt{EOS}\mid g_t,x),$ and $\mathbb P(A_t=\textsf{CONT}\mid g_t,x)=
1-q(\texttt{EOS}\mid g_t,x).$ Conditional on \(A_t=\textsf{CONT}\), the next token \(Y_t\in\mathcal V\) is sampled from $\mathbb P(Y_t=v\mid g_t,x,A_t=\textsf{CONT})=\frac{q(v\mid g_t,x)}
{1-q(\texttt{EOS}\mid g_t,x)},v\in\mathcal V,$ whenever \(1-q(\texttt{EOS}\mid g_t,x)>0\). This two-stage process induces the same distribution over generated sequences as the original autoregressive model.
\end{theorem}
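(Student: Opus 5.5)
The plan is to show that the two processes agree step by step. First we check that the one-step law over $\mathcal Z$ induced by the two-stage rule coincides with $q(\cdot\mid g_t,x)$. Then we lift this to full sequences via the chain rule of probability.

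\textbf{One-step identity.} Encode the two-stage outcome as a single symbol $\widetilde Z_t\in\mathcal Z$. We set $\widetilde Z_t=\texttt{EOS}$ if $A_t=\textsf{STOP}$, and $\widetilde Z_t=Y_t$ if $A_t=\textsf{CONT}$.

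For the stop symbol, $\mathbb P(\widetilde Z_t=\texttt{EOS}\mid g_t,x)=\mathbb P(A_t=\textsf{STOP}\mid g_t,x)=q(\texttt{EOS}\mid g_t,x)$.

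For $v\in\mathcal V$, the law of total probability gives
\[
\mathbb P(\widetilde Z_t=v\mid g_t,x)=\bigl(1-q(\texttt{EOS}\mid g_t,x)\bigr)\cdot\frac{q(v\mid g_t,x)}{1-q(\texttt{EOS}\mid g_t,x)}=q(v\mid g_t,x).
\]
This holds whenever $q(\texttt{EOS}\mid g_t,x)<1$.

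The degenerate case $q(\texttt{EOS}\mid g_t,x)=1$ needs separate handling, and it is the only real subtlety. In that case $A_t=\textsf{STOP}$ almost surely, so the undefined conditional law of $Y_t$ is never invoked. Meanwhile $q(v\mid g_t,x)=0$ for every $v\in\mathcal V$, so the two one-step laws still agree.

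\textbf{Lifting to sequences.} Both processes are autoregressive with the same transition kernel from $(g_t,x)$ to $\mathcal Z$, and both stop at the first \texttt{EOS}. Hence, for any finite sequence $(y_1,\dots,y_{n-1},\texttt{EOS})$, the chain rule gives the same probability under both processes:
\[
\prod_{t=1}^{n-1} q(y_t\mid g_t,x)\; q(\texttt{EOS}\mid g_n,x).
\]
Here $g_t=(y_1,\dots,y_{t-1})$, and the product is obtained by a short induction on $t$ that uses the one-step identity at each history.

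The events that a generation never terminates are handled by the same argument. Their probabilities are limits of cylinder-set probabilities, and these cylinder probabilities agree at every finite prefix. By the uniqueness of measures determined on cylinder sets (Kolmogorov/Carathéodory), the two processes induce the same law on the whole sequence space.

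It follows that the joint law of $(N,Y_{1:N})$ is identical under both processes, which proves the theorem.
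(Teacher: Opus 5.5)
Your proposal is correct and follows essentially the same route as the paper. You first show, via the law of total probability, that the two-stage rule reproduces $q(\cdot\mid g_t,x)$ on $\mathcal Z$ at every history, and then conclude that the induced laws over generated sequences coincide. Your explicit handling of the degenerate case $q(\texttt{EOS}\mid g_t,x)=1$ and of non-terminating trajectories (via uniqueness on cylinder sets) is a bit more careful than the paper, which asserts the sequence-level equality directly from the one-step identity.
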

\begin{proof}
Fix a prompt \(x\) and history \(g_t\). Under the original autoregressive model, the next outcome is sampled directly from $q(z\mid g_t,x), z\in \mathcal V\cup\{\texttt{EOS}\}.$
In particular, generation terminates at step \(t\) with probability $q(\texttt{EOS}\mid g_t,x).$ Now consider the two-stage process. The probability of termination is $\mathbb P(A_t=\textsf{STOP}\mid g_t,x)=q(\texttt{EOS}\mid g_t,x),$ which matches the probability assigned to \(\texttt{EOS}\) by the original model. Next, for any token \(v\in\mathcal V\), the probability that the two-stage process generates \(v\) is
\[
\begin{aligned}
\mathbb P(Y_t=v\mid g_t,x)
&=
\mathbb P(A_t=\textsf{CONT}\mid g_t,x)
\,
\mathbb P(Y_t=v\mid g_t,x,A_t=\textsf{CONT}) \\
&=
\left(1-q(\texttt{EOS}\mid g_t,x)\right)
\frac{q(v\mid g_t,x)}
{1-q(\texttt{EOS}\mid g_t,x)} \\
&=
q(v\mid g_t,x).
\end{aligned}
\]
Thus, the two-stage process assigns exactly the same probability to every possible next outcome:
\[
\mathbb P(\textsf{STOP}\mid g_t,x)
=
q(\texttt{EOS}\mid g_t,x),
\;
\mathbb P(Y_t=v\mid g_t,x)
=
q(v\mid g_t,x),
\quad v\in\mathcal V.
\]
Therefore, the one-step transition distribution is identical under the two formulations. Since this equality holds for every prompt \(x\), every generation history \(g_t\), and every generation step \(t\), the induced probability of any finite sequence \(y_{1:N}\) terminating at length \(N\) is the same under both processes. Thus, the two-stage stochastic decision process is equivalent to the original autoregressive generation model.
\end{proof}

\subsection{Proof of \autoref{thm:entropy}}\label{app:proof_thm1}
\begin{proof}
Let \(X \sim p_X\) denote the prompt distribution. For each realization \(X=x\), consider the sequence of random variables \((Z_t)_{t \ge 1}\) with \(Z_t \in \mathcal Z := \mathcal V \cup \{\texttt{EOS}\}\), where selecting \(\texttt{EOS}\) indicates termination. Define the stopping time $N := \min\{t : Z_t = \texttt{EOS}\},$
and let \(Y_t = Z_t\) for \(t < N\). Adopt the convention that \(Z_t = \texttt{EOS}\) for all \(t > N\), which induces a one-to-one correspondence between the infinite sequence \((Z_1, Z_2, \ldots)\) and the pair \((N, Y_{1:N})\), conditional on \(X\). Therefore,
\[
H(N, Y_{1:N} \mid X)
=
H(Z_1, Z_2, \ldots \mid X).
\]

By the chain rule of entropy,
\(
H(Z_1, Z_2, \ldots \mid X)
=
\sum_{t=1}^{\infty}
H(Z_t \mid Z_{1:t-1}, X).
\)
Since the history \(G_t\) is completely determined by \((Z_{1:t-1}, X)\), we have
\[
H(Z_t \mid Z_{1:t-1}, X)
=
H(Z_t \mid G_t, X)
=
\mathbb{E}_{G_t, X}\big[\widetilde H(G_t \mid X)\big].
\]
Therefore,
\(
H(Z_1, Z_2, \ldots \mid X)
=
\sum_{t=1}^{\infty}
\mathbb{E}_{G_t, X}\big[\widetilde H(G_t \mid X)\big].
\)

Since \(\widetilde H(G_t \mid X) \ge 0\), we can apply Tonelli's theorem \citep{folland1999real} to exchange the sum and expectation.
\[
\sum_{t=1}^{\infty}
\mathbb{E}_{G_t, X}\big[\widetilde H(G_t \mid X)\big]
=
\mathbb{E}_{G_t, X}\left[
\sum_{t=1}^{\infty}
\widetilde H(G_t \mid X)
\right].
\]

Moreover, for all \(t > N\), \(Z_t = \texttt{EOS}\) deterministically, so \(\widetilde H(G_t \mid X) = 0\). Therefore,
\(
\sum_{t=1}^{\infty}
\widetilde H(G_t \mid X)
=
\sum_{t=1}^{N(X)}
\widetilde H(G_t \mid X)
\), almost surely. Combining the above, we have
\(
H(Z_1, Z_2, \ldots \mid X)
=
\mathbb{E}_{G_t, X}\left[
\sum_{t=1}^{N(X)}
\widetilde H(G_t \mid X)
\right].
\) Finally, since \((Z_1, Z_2, \ldots)\) is in one-to-one correspondence with \((N, Y_{1:N})\) given \(X\), we obtain
\(
H(N, Y_{1:N} \mid X)
=
\mathbb{E}_{G_t, X}\left[
\sum_{t=1}^{N(X)}
\widetilde H(G_t \mid X)
\right]
=
\mathrm{CE}^\star.
\)
\end{proof}
\subsection{Algorithms}

\begin{algorithm}[H]
\caption{Estimating $\mathrm{CE}^{\star}_{\max}$ over a prompt distribution}
\label{alg:tmstar_est}
\KwIn{Prompt distribution $p_X$; number of prompts $P$; rollout count per prompt $M$; maximum length $T_{\max}$; sampling policy $\pi$ (e.g.\ temperature/top-$p$/top-$k$).}
\For{$p=1$ \KwTo $P$}{
    Sample prompt $x_p \sim p_X$;\\

    \For{$i=1$ \KwTo $M$}{
        Initialize history $G^{(p,i)}_1 \leftarrow x_p$; set $S_{\max}^{(p,i)} \leftarrow 0$;\\

        \For{$t=1$ \KwTo $T_{\max}$}{
            Compute model distribution
            \(
            q_t^{(p,i)}(\cdot \mid G_t^{(p,i)}, x_p)
            \)
            under sampling policy $\pi$, including \texttt{EOS} as an outcome;\\

            Compute one-step local entropy
            \(
            \tilde g_t^{(p,i)}
            \leftarrow
            -\sum_{z\in\mathcal Z}
            q_t^{(p,i)}(z\mid G_t^{(p,i)},x_p)
            \log q_t^{(p,i)}(z\mid G_t^{(p,i)},x_p);
            \)

            Update
            \(
            S_{\max}^{(p,i)}
            \leftarrow
            S_{\max}^{(p,i)}+\tilde g_t^{(p,i)};
            \)

            Sample next outcome
            \(
            z_t^{(p,i)}
            \sim
            q_t^{(p,i)}(\cdot \mid G_t^{(p,i)},x_p);
            \)

            If $z_t^{(p,i)}=\texttt{EOS}$ then set $N_{\max}^{(p,i)}\leftarrow t$ and break;\\
            Else update history
                \(
                G_{t+1}^{(p,i)}
                \leftarrow
                G_t^{(p,i)}\oplus z_t^{(p,i)}.
                \)
            
        }

        If no \texttt{EOS} is sampled by $T_{\max}$: set $N_{\max}^{(p,i)}\leftarrow T_{\max}$.
        
    }
}

\Return{
\(
\widehat{\mathrm{CE}}^{\star}_{\max,P,M}
=
\frac{1}{PM}
\sum_{p=1}^{P}
\sum_{i=1}^{M}
S_{\max}^{(p,i)},
\;
\widehat{\mathbb E}[N_{\max}]
=
\frac{1}{PM}
\sum_{p=1}^{P}
\sum_{i=1}^{M}
N_{\max}^{(p,i)}.
\)
}
\end{algorithm}
\subsection{Examples of prompt-averaged bias decay rates}\label{app:g_examples_prompt}
\begin{corollary}
\label{cor:g_examples_prompt}
By Remark~\ref{rem:g_tail_prompt}, \(g(T_{\max})\) admits the following rates.

\textbf{Uniform exponential tail.}
If there exist constants \(c,\lambda>0\) such that, for all prompts \(x \in \mathcal X\) and
all \(t\ge 1\),
$
\mathbb{P}(N(x)\ge t\mid X=x)\le c e^{-\lambda t},
$
then
$
g(T_{\max})
\le
c\log |\mathcal Z|
\sum_{t=T_{\max}+1}^{\infty}e^{-\lambda t}
=
c\log |\mathcal Z|
\frac{e^{-\lambda(T_{\max}+1)}}{1-e^{-\lambda}}
=
O(e^{-\lambda T_{\max}}).
$

\textbf{Uniform polynomial tail.}
If there exist constants \(c>0\) and \(\alpha>1\) such that, for all prompts
\(x \in \mathcal X\) and all \(t\ge 1\),
$
\mathbb{P}(N(x)\ge t\mid X=x)\le c t^{-\alpha},
$
then
$
g(T_{\max})
\le
c\log |\mathcal Z|
\sum_{t=T_{\max}+1}^{\infty}t^{-\alpha}
\le
\frac{c\log |\mathcal Z|}{\alpha-1}
T_{\max}^{1-\alpha}
=
O(T_{\max}^{1-\alpha}).
$
\end{corollary}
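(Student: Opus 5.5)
Both rates follow by substituting the assumed tail bound into the inequality of Remark~\ref{rem:g_tail_prompt}. That remark gives
\[
g(T_{\max}) \le \log|\mathcal Z|\,\mathbb{E}_{X\sim p_X}\Big[\sum_{t=T_{\max}+1}^{\infty}\mathbb{P}(N(X)\ge t\mid X)\Big].
\]
In each case the hypothesis bounds $\mathbb{P}(N(x)\ge t\mid X=x)$ uniformly in $x$ by a deterministic sequence $a_t$. Since all terms are nonnegative, Tonelli's theorem lets me swap the outer expectation over $X$ and the sum over $t$. The bound therefore holds pointwise in $x$ and passes through $\mathbb{E}_{X}$, leaving
\[
g(T_{\max})\le c\log|\mathcal Z|\sum_{t>T_{\max}}a_t .
\]
Uniformity in $x$ is essential here; without it one would need an integrable envelope for the constants.

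\textbf{Exponential case.} I take $a_t=e^{-\lambda t}$ and sum the geometric series exactly:
\[
\sum_{t\ge T_{\max}+1}e^{-\lambda t}=\frac{e^{-\lambda(T_{\max}+1)}}{1-e^{-\lambda}}.
\]
The denominator is positive because $\lambda>0$. The prefactor $\frac{c\log|\mathcal Z|\,e^{-\lambda}}{1-e^{-\lambda}}$ does not depend on $T_{\max}$, which gives $O(e^{-\lambda T_{\max}})$.

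\textbf{Polynomial case.} I take $a_t=t^{-\alpha}$ with $\alpha>1$ and use an integral comparison. Since $s\mapsto s^{-\alpha}$ is decreasing, $t^{-\alpha}\le\int_{t-1}^{t}s^{-\alpha}\,ds$ for $t\ge 2$. Summing from $t=T_{\max}+1$ onward gives
\[
\sum_{t>T_{\max}}t^{-\alpha}\le\int_{T_{\max}}^{\infty}s^{-\alpha}\,ds=\frac{T_{\max}^{1-\alpha}}{\alpha-1},
\]
which is finite because $\alpha>1$. Multiplying by $c\log|\mathcal Z|$ yields $O(T_{\max}^{1-\alpha})$.

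The only genuinely delicate point is the step inherited from Remark~\ref{rem:g_tail_prompt}, namely the bound $\widetilde H\le\log|\mathcal Z|$. This requires $\mathcal Z$ to be finite, and it holds here since the vocabulary plus \texttt{EOS} is finite; under top-$k$ one can even replace $|\mathcal Z|$ by $k$. Apart from that, the argument is a routine tail-sum estimate.
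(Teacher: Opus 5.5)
Your proposal is correct and follows the same route as the paper. You plug the uniform tail bound into the tail-sum bound of Remark~\ref{rem:g_tail_prompt}, sum the geometric series exactly in the exponential case, and compare the sum with $\int_{T_{\max}}^{\infty}s^{-\alpha}\,ds$ in the polynomial case; your extra justifications (the pointwise-in-$x$ bound passing through $\mathbb{E}_X$, the comparison $t^{-\alpha}\le\int_{t-1}^{t}s^{-\alpha}\,ds$, and finiteness of $\mathcal Z$ for $\widetilde H\le\log|\mathcal Z|$) spell out steps the paper leaves implicit.
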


\subsection{Proof of \autoref{thm:unified_tmstar_prompt} and MSE analysis}\label{app:proof_tmstar_prompt}

\begin{proof}
Given any fixed prompt $x \in \mathcal X$,
denote $\mu_{\max}(x)
:=
\mathbb E\!\left[S_{\max}\mid X=x\right],$ and $S_{\max}
=\sum_{t=1}^{N_{\max}(x)}
\widetilde H(G_t\mid x).$ Then $\mathrm{CE}^{\star}_{\max}
=
\mathbb E_{X \sim p_X}[\mu_{\max}(X)].$ Since \(|\mathcal Z|<\infty\), for all $g_t$, we have $0\le \widetilde H(g_t\mid x)\le H_{\max}:=\log|\mathcal Z|.$ Therefore
\(
0\le S_{\max}\le H_{\max}N_{\max}(x).
\)
Therefore,
\(
\mathbb E[S_{\max}^2]
\le
H_{\max}^2\mathbb E[N_{\max}(X)^2]
=
H_{\max}^2\nu_{\max}^2,
\) where $\nu_{\max}^2:= E[N_{\max}(X)^2]$.

Now decompose the variance of $\widehat{\mathrm{CE}}^{\star}_{\max,P,M}$. Conditional on
\(X_1,\dots,X_p\), the \(M\) rollouts for each prompt are independent, by law of total variance, we have
\[
\operatorname{Var}\!\left(
\widehat{\mathrm{CE}}^{\star}_{\max,P,M}
\right)
=
\operatorname{Var}\!\left(
\frac1P\sum_{p=1}^P \mu_{\max}(X_p)
\right)
+
\mathbb E\!\left[
\operatorname{Var}\!\left(
\widehat{\mathrm{CE}}^{\star}_{\max,P,M}
\mid X_1,\dots,X_p
\right)
\right].
\]
By Jensen's inequality, we have $\mu_{\max}^2(X) = H^2_{\max}\mathbb E(N_{\max}(X))^2 \leq H^2_{\max}\nu^2_{\max}$, therefore
\[
\operatorname{Var}\!\left(
\frac1P\sum_{p=1}^P \mu_{\max}(X_p)
\right)
=
\frac{1}{P}\operatorname{Var}(\mu_{\max}(X))
\le
\frac{1}{P}\mathbb E[\mu_{\max}(X)^2]
\le
\frac{H_{\max}^2\nu_{\max}^2}{P}.
\]
Because,
\(
\mathbb E\left[
\operatorname{Var}\!\left(
\widehat{\mathrm{CE}}^{\star}_{\max,P,M}
\mid X_1,\dots,X_p
\right)
\right]
\le
\frac{H_{\max}^2\nu_{\max}^2}{PM}.
\)
Thus we have,
\[
\operatorname{Var}\left(
\widehat{\mathrm{CE}}^{\star}_{\max,P,M}
\right)
\le
H_{\max}^2\nu_{\max}^2
\left(
\frac1P+\frac1{PM}
\right).
\]

Since
\(
\mathbb E\left[
\widehat{\mathrm{CE}}^{\star}_{\max,P,M}
\right]
=
\mathrm{CE}^{\star}_{\max},
\)
Chebyshev's inequality gives that, for any \(\delta\in(0,1)\), with probability at least
\(1-\delta\),
\(
\left|
\widehat{\mathrm{CE}}^{\star}_{\max,P,M}
-
\mathrm{CE}^{\star}_{\max}
\right|
\le
H_{\max}\nu_{\max}
\sqrt{
\frac{1}{\delta}
\left(
\frac1P+\frac1{PM}
\right)
}.
\)
By the truncation bias assumption,
\(
0\le
\mathrm{CE}^{\star}
-
\mathrm{CE}^{\star}_{\max}
\le
g(T_{\max}).
\)
Thus,
\[
\left|
\widehat{\mathrm{CE}}^{\star}_{\max,P,M}
-
\mathrm{CE}^{\star}
\right|
\le
H_{\max}\nu_{\max}
\sqrt{
\frac{1}{\delta}
\left(
\frac1P+\frac1{PM}
\right)
}
+
g(T_{\max}).
\]
Equivalently,
\(
\left|
\widehat{\mathrm{CE}}^{\star}_{\max,P,M}
-
\mathrm{CE}^{\star}
\right|
=
O_{\mathbb P}\left(
\nu_{\max}
\sqrt{
\frac1P+\frac1{PM}
}
+
g(T_{\max})
\right).
\) Thus, if \(T_{\max}=T_{\max}(P,M)\to\infty\),
\(
g(T_{\max})\to 0,
\)
and if
\(
\nu_{\max}(P,M)
\sqrt{
\frac1P+\frac1{PM}
}
\to 0,
\)
then we have
\(
\widehat{\mathrm{CE}}^{\star}_{\max,P,M}
\xrightarrow{\mathbb P}
\mathrm{CE}^{\star}.
\)
\end{proof}

\begin{theorem}[MSE of the prompt-averaged truncated CE estimator]\label{thm:prompt_tmstar_refined_variance}
\[
\mathbb E\left[
\left(
\widehat{\mathrm{CE}}^{\star}_{\max,P,M}
-
\mathrm{CE}^{\star}
\right)^2
\right]
\le
\frac{H_{\max}^2}{P}
\operatorname{Var}_{X \sim p_X}\!\left(
\mathbb E[N_{\max}\mid X]
\right)
+
\frac{H_{\max}^2}{PM}
\mathbb E[N_{\max}^2]
+
g^2(T_{\max}).
\]
\end{theorem}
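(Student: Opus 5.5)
The plan is the standard bias--variance decomposition, followed by the law of total variance across the two sampling levels (prompts, then rollouts). The first term is where the real difficulty lies.

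\textbf{Step 1 (bias--variance split).} As in the proof of \autoref{thm:unified_tmstar_prompt}, $\mathbb E[\widehat{\mathrm{CE}}^{\star}_{\max,P,M}]=\mathrm{CE}^{\star}_{\max}$. The quantity $\mathrm{CE}^{\star}-\mathrm{CE}^{\star}_{\max}=g(T_{\max})$ is deterministic, so the cross term vanishes and
\[
\mathbb E\big[(\widehat{\mathrm{CE}}^{\star}_{\max,P,M}-\mathrm{CE}^{\star})^2\big]=\operatorname{Var}(\widehat{\mathrm{CE}}^{\star}_{\max,P,M})+g^2(T_{\max}).
\]
This already produces the $g^2(T_{\max})$ term exactly.

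\textbf{Step 2 (law of total variance).} Conditioning on $X_1,\dots,X_P$ gives
\[
\operatorname{Var}(\widehat{\mathrm{CE}}^{\star}_{\max,P,M})=\frac1P\operatorname{Var}(\mu_{\max}(X))+\frac{1}{PM}\mathbb E[\operatorname{Var}(S_{\max}\mid X)].
\]
Here $\mu_{\max}(x)=\mathbb E[S_{\max}\mid X=x]$, and I use that prompts are i.i.d.\ and rollouts are conditionally i.i.d. For the within-prompt term, use $\operatorname{Var}(S_{\max}\mid X)\le\mathbb E[S_{\max}^2\mid X]$ together with $0\le S_{\max}\le H_{\max}N_{\max}$. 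This gives $\frac{H_{\max}^2}{PM}\mathbb E[N_{\max}^2]$, matching the second term.

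\textbf{Step 3 (between-prompt term, the main obstacle).} The statement requires
\[
\operatorname{Var}(\mu_{\max}(X))\le H_{\max}^2\operatorname{Var}(\mathbb E[N_{\max}\mid X]).
\]
The pointwise domination $0\le\mu_{\max}(x)\le H_{\max}\mathbb E[N_{\max}\mid x]$ does \emph{not} give this, because a variance is not monotone under domination.

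My first attempt is to write $\mu_{\max}(x)=h(x)\,\mathbb E[N_{\max}\mid x]$, where $h(x)\in[0,H_{\max}]$ is the length-weighted average per-step entropy under prompt $x$. Then $\mu_{\max}(X)$ is compared with $H_{\max}\mathbb E[N_{\max}\mid X]$ through a Lipschitz or monotone-coupling argument.

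This closes only if $h(x)$ does not vary with the prompt. If $h\equiv h_0$ with $h_0\le H_{\max}$, then
\[
\operatorname{Var}(\mu_{\max}(X))=h_0^2\operatorname{Var}(\mathbb E[N_{\max}\mid X])\le H_{\max}^2\operatorname{Var}(\mathbb E[N_{\max}\mid X]),
\]
and the theorem follows.

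In general I expect this step to fail. Take two prompts with $\mathbb E[N_{\max}\mid x]=n$ for both, where one prompt has deterministic continuations ($h=0$) and the other has $h=H_{\max}$. Then the right-hand side is zero while $\operatorname{Var}(\mu_{\max}(X))=H_{\max}^2n^2/4>0$.

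So the plan is to carry out Steps 1--2 rigorously. In Step 3 I will prove the bound under the condition that the per-token average entropy $h(x)$ is constant across prompts, and flag that this condition is needed. Without it, the first term must be replaced by $\frac1P\operatorname{Var}(\mu_{\max}(X))$, or by the second-moment bound used in \autoref{thm:unified_tmstar_prompt}.
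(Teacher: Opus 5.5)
Your Steps 1 and 2 match the paper's proof essentially line by line. That covers the bias--variance split with the deterministic bias $g(T_{\max})$, the law of total variance over prompts and rollouts, and the bound $\operatorname{Var}(S_{\max}\mid X)\le H_{\max}^2\,\mathbb E[N_{\max}^2\mid X]$ obtained from $0\le S_{\max}\le H_{\max}N_{\max}$. At Step 3 the paper does exactly what you decline to do. It derives the pointwise bound $0\le\mu_{\max}(X)\le H_{\max}\,\mathbb E[N_{\max}\mid X]$ and then concludes directly that $\operatorname{Var}_X(\mu_{\max}(X))\le H_{\max}^2\operatorname{Var}_X(\mathbb E[N_{\max}\mid X])$. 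Your objection to that step is correct, so the gap is in the paper's argument, not yours. No better argument can close it, because the stated inequality is false.

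Your counterexample needs two adjustments. First, it only refutes the intermediate inequality, and the $\frac{1}{PM}$ term on the right-hand side could in principle absorb that discrepancy. Second, it needs a small repair: per-step entropy exactly $\log|\mathcal Z|$ at every step forces the uniform law on $\mathcal Z$, so that prompt has a (truncated) geometric length whose mean generally cannot equal that of a deterministic prompt.

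Here is a version that breaks the full statement. Let $p_X$ put mass $1/2$ on each of $x_1,x_2$, and take $T_{\max}\ge n$. Under $x_1$, every step is deterministic, with \texttt{EOS} at step $n$. Under $x_2$, steps $1,\dots,n-1$ are uniform on $\mathcal V$ and step $n$ is \texttt{EOS} with probability one. Then:
\begin{itemize}
\item $N=n$ almost surely under both prompts, so $g(T_{\max})=0$ and $\operatorname{Var}_X(\mathbb E[N_{\max}\mid X])=0$.
\item $S_{\max}$ is a deterministic function of the prompt (equal to $0$ or $(n-1)\log|\mathcal V|$), so the within-prompt variance vanishes.
\item The MSE is therefore exactly $\frac{1}{4P}(n-1)^2\log^2|\mathcal V|$, while the right-hand side is $\frac{1}{PM}n^2\log^2|\mathcal Z|$.
\end{itemize}
The claimed bound fails whenever $M>4n^2\log^2|\mathcal Z|/\bigl((n-1)^2\log^2|\mathcal V|\bigr)$. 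For example, $M\ge 5$ suffices once $n\ge 10$ and $|\mathcal V|\ge 2^{15}$, and in particular the bound fails at the $M=100$ used in the experiments.

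So your plan is the right one. The bound holds when $h$ is prompt-invariant. More generally, it holds whenever $\mu_{\max}(x)=\phi(\mathbb E[N_{\max}\mid x])$ for some $H_{\max}$-Lipschitz $\phi$, since then $\operatorname{Var}(\phi(Y))\le H_{\max}^2\operatorname{Var}(Y)$. In general, the first term must be replaced either by $\frac1P\operatorname{Var}_X(\mu_{\max}(X))$ or by the valid bound $\frac{H_{\max}^2}{P}\,\mathbb E_X\bigl[(\mathbb E[N_{\max}\mid X])^2\bigr]$.
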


In \autoref{thm:prompt_tmstar_refined_variance}, we further characterize the mean squared error (MSE) of $\widehat{\mathrm{CE}}^\star_{\max, P, M}$ as the sum of the squared truncation bias and the variance of the estimator. The variance decreases with both the number of prompts $P$ and the total number of rollouts $PM$, highlighting two distinct sources of statistical efficiency: diversity across prompts and repeated sampling within each prompt.

The variance admits a hierarchical decomposition with two distinct sources of randomness: a $1/P$ term due to variability across prompts, and a $1/PM$ term arising from rollout-level stochasticity within each prompt. Importantly, the prompt-level variance cannot be reduced by increasing the number of rollouts per prompt, reflecting the intrinsic hierarchical structure of the estimator. Moreover, the Monte Carlo error is scaled by the effective trajectory length (captured by $T_{\max}$ or, more precisely, $N_{\max}$), since longer generations accumulate more uncertainty along the trajectory.

These results reveal a fundamental trade-off. Increasing $T_{\max}$ reduces the truncation bias by capturing more of the generation process, but also increases computational cost and variance through longer trajectories. 
\begin{proof}
Define the per-prompt rollout average
\(
\bar S_{\max}^{(p)}
:=
\frac{1}{M}
\sum_{i=1}^M
S_{\max}^{(p,i)}
\).
Then
\(
\widehat{\mathrm{CE}}^{\star}_{\max,P,M}
=
\frac{1}{P}
\sum_{p=1}^P
\bar S_{\max}^{(p)}.
\)
Conditional on \(X_p\),
\(
\mathbb E[\bar S_{\max}^{(p)}\mid X_p]
=
\mu_{\max}(X_p),
\)
and, since the \(M\) rollouts are conditionally independent,
\(
\operatorname{Var}(\bar S_{\max}^{(p)}\mid X_p)
=
\frac{1}{M}\sigma_{\max}^2(X_p).
\)
By the law of total variance, we have
\[
\operatorname{Var}(\bar S_{\max}^{(p)})
=
\operatorname{Var}_{X_p}
\left(
\mathbb E[\bar S_{\max}^{(p)}\mid X_p]
\right)
+
\mathbb E_{X_p}
\left[
\operatorname{Var}(\bar S_{\max}^{(p)}\mid X_p)
\right].
\]
Substituting the conditional mean and variance gives
\(
\operatorname{Var}(\bar S_{\max}^{(p)})
=
\operatorname{Var}_{X \sim p_X}(\mu_{\max}(X))
+
\frac{1}{M}
\mathbb E_{X \sim p_X}[\sigma_{\max}^2(X)].
\)
Since the prompts \(X_1,\dots,X_p\) are i.i.d., the variables
\(\bar S_{\max}^{(1)},\dots,\bar S_{\max}^{(P)}\) are i.i.d. Therefore,
\(
\operatorname{Var}
\left(
\widehat{\mathrm{CE}}^{\star}_{\max,P,M}
\right)
=
\frac{1}{P}
\operatorname{Var}(\bar S_{\max}^{(p)})
=
\frac{1}{P}
\operatorname{Var}_{X \sim p_X}(\mu_{\max}(X))
+
\frac{1}{PM}
\mathbb E_{X \sim p_X}[\sigma_{\max}^2(X)].
\)

For the MSE, write
\(
\widehat{\mathrm{CE}}^{\star}_{\max,P,M}
-
\mathrm{CE}^{\star}
=
\left(
\widehat{\mathrm{CE}}^{\star}_{\max,P,M}
-
\mathrm{CE}^{\star}_{\max}
\right)
-
g(T_{\max}).
\)
Taking squares and expectations yields
\(
\mathbb E\left[
\left(
\widehat{\mathrm{CE}}^{\star}_{\max,P,M}
-
\mathrm{CE}^{\star}
\right)^2
\right]
=
\mathbb E\left[
\left(
\widehat{\mathrm{CE}}^{\star}_{\max,P,M}
-
\mathrm{CE}^{\star}_{\max}
\right)^2
\right]
+
(g(T_{\max}))^2,
\)
because
\(
\mathbb E[
\widehat{\mathrm{CE}}^{\star}_{\max,P,M}
-
\mathrm{CE}^{\star}_{\max}
]
=0.
\)

Since pathwise, for any given history $g_t$ and prompt $x$, we have 
\(
0\le
S_{\max}(x)
=
\sum_{t=1}^{N_{\max}(x)}
\widetilde H(g_t\mid x)
\le
H_{\max}N_{\max}(x).
\)
Thus, conditional on \(X=x\),
\(
\operatorname{Var}(S_{\max}(x)\mid X=x)
\le
\mathbb E[S_{\max}^2(x)\mid X=x]
\le
H_{\max}^2
\mathbb E[N_{\max}^2(x)\mid X=x].
\)
Averaging over prompts gives
\(
\mathbb E_X[\sigma_{\max}^2(X)]
\le
H_{\max}^2
\mathbb E_X\mathbb E[N_{\max}^2(X)\mid X]
=
H_{\max}^2\mathbb E[N_{\max}^2].
\)

Similarly,
\(
\mu_{\max}(X)
=
\mathbb E[S_{\max}(X)\mid X]
\le
H_{\max}
\mathbb E[N_{\max}\mid X].
\)
Since both quantities are nonnegative, this gives the prompt-level bound
\(
0\le
\mu_{\max}(X)
\le
H_{\max}
\mathbb E[N_{\max}\mid X].
\)
Therefore,
\(
\operatorname{Var}_X(\mu_{\max}(X))
\le
H_{\max}^2
\operatorname{Var}_X\!\left(
\mathbb E[N_{\max}\mid X]
\right).
\)
Combining these two refined bounds with the exact variance decomposition gives
\[
\operatorname{Var}
\left(
\widehat{\mathrm{CE}}^{\star}_{\max,P,M}
\right)
\le
\frac{H_{\max}^2}{P}
\operatorname{Var}_{X \sim p_X}\!\left(
\mathbb E[N_{\max}\mid X]
\right)
+
\frac{H_{\max}^2}{PM}
\mathbb E[N_{\max}^2].
\]
Adding the squared truncation bias \( g^2(T_{\max})\) gives the stated refined
MSE upper bound.
\end{proof}

\subsection{Estimation and consistency of GenPPL, BF, and length--entropy correlation}\label{app:other_estimators}
\begin{lemma}[Consistency of the truncated length estimator]\label{lem:nmax_consistency}
Let
\(
\widehat N_{\max,P,M}
:=
\frac{1}{PM}
\sum_{p=1}^P\sum_{i=1}^M
N_{\max}^{(p,i)}
\), where
\(
N_{\max}^{(p,i)}
:=
\min\{N^{(p,i)},T_{\max}\}.
\)
Assume \(\mathbb E[N]<\infty\). If \(T_{\max}=T_{\max}(P,M)\to\infty\) and
\(
\frac{\sqrt{\mathbb E[N_{\max}^2]}}{\sqrt{P}}\to 0,
\)
then
\(
\widehat N_{\max,P,M}
\xrightarrow{\mathbb P}
\mathbb E[N].
\)
\end{lemma}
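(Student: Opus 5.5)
The plan mirrors the proof of \autoref{thm:unified_tmstar_prompt}, with the local entropy replaced by the constant $1$, so that $S_{\max}$ becomes $N_{\max}$. We split the error as
\[
\widehat N_{\max,P,M}-\mathbb E[N]
=
\bigl(\widehat N_{\max,P,M}-\mathbb E[N_{\max}]\bigr)
-
\bigl(\mathbb E[N]-\mathbb E[N_{\max}]\bigr),
\]
and show separately that the stochastic term vanishes in probability and the deterministic truncation term vanishes.

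\textbf{Stochastic term.} First, $\widehat N_{\max,P,M}$ is unbiased for $\mathbb E[N_{\max}]$, since each $N_{\max}^{(p,i)}$ has marginal mean $\mathbb E[N_{\max}]$. Next we bound the variance using the law of total variance, conditioning on the prompts $X_1,\dots,X_P$, exactly as in the proof of \autoref{thm:prompt_tmstar_refined_variance}:
\[
\operatorname{Var}(\widehat N_{\max,P,M})
=
\frac1P\operatorname{Var}_X\!\bigl(\mathbb E[N_{\max}\mid X]\bigr)
+
\frac{1}{PM}\mathbb E_X\!\bigl[\operatorname{Var}(N_{\max}\mid X)\bigr]
\le
\mathbb E[N_{\max}^2]\Bigl(\frac1P+\frac1{PM}\Bigr).
\]
Since $\frac1P+\frac1{PM}\le \frac2P$, this is at most $2\,\mathbb E[N_{\max}^2]/P$, which tends to $0$ by hypothesis. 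Chebyshev's inequality then gives $\widehat N_{\max,P,M}-\mathbb E[N_{\max}]\xrightarrow{\mathbb P}0$.

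\textbf{Truncation term.} We have
\[
\mathbb E[N]-\mathbb E[N_{\max}]=\mathbb E[(N-T_{\max})_+]\ge 0,
\]
and $(N-T_{\max})_+\downarrow 0$ pointwise as $T_{\max}\to\infty$. Because it is dominated by $N$, which is integrable since $\mathbb E[N]<\infty$, dominated (or monotone) convergence gives $\mathbb E[N_{\max}]\to\mathbb E[N]$. Equivalently, the tail-sum formula of Remark~\ref{rem:g_tail_prompt} with $H_{\max}$ replaced by $1$ expresses this bias as the tail of the convergent series $\sum_t \mathbb P(N\ge t)$. Combining the two terms yields the claim.

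\textbf{Main obstacle.} The only delicate point is that the sampling distribution changes with $(P,M)$ through $T_{\max}(P,M)$, so a plain law of large numbers does not apply. This is why the argument relies on a Chebyshev bound that is uniform in the triangular array, which is what the rate condition $\sqrt{\mathbb E[N_{\max}^2]/P}\to 0$ provides. A further check is that no condition on $M$ is needed: the within-prompt variance term is already dominated by the $1/P$ term.
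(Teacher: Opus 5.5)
Your proposal is correct and follows the paper's proof essentially step for step. The shared steps are the split into the stochastic term $\widehat N_{\max,P,M}-\mathbb E[N_{\max}]$ and the deterministic truncation gap, the law-of-total-variance bound $\operatorname{Var}(\widehat N_{\max,P,M})\le \mathbb E[N_{\max}^2](\frac1P+\frac1{PM})$ followed by Chebyshev, and monotone convergence $N_{\max}\uparrow N$ for the bias; your remarks that $M$ plays no role and that Chebyshev handles the $(P,M)$-dependence of $T_{\max}$ are accurate additions rather than departures.
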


\begin{proof}
Let
\(
\eta_{\max}(x):=\mathbb E[N_{\max}\mid X=x].
\)
By the law of total variance,
\(
\operatorname{Var}(\widehat N_{\max,P,M})
=
\frac{1}{P}\operatorname{Var}_{X \sim p_X}(\eta_{\max}(X))
+
\frac{1}{PM}
\mathbb E_{X \sim p_X}\left[
\operatorname{Var}(N_{\max}\mid X)
\right].
\)
Using
\(
\operatorname{Var}(\eta_{\max}(X))
\le
\mathbb E[\eta_{\max}(X)^2]
\le
\mathbb E[N_{\max}^2],
\)
and
\(
\mathbb E[\operatorname{Var}(N_{\max}\mid X)]
\le
\mathbb E[N_{\max}^2],
\)
we obtain
\(
\operatorname{Var}(\widehat N_{\max,P,M})
\le
\mathbb E[N_{\max}^2]
\left(
\frac{1}{P}
+
\frac{1}{PM}
\right).
\)
Therefore, under the stated condition,
\(
\operatorname{Var}(\widehat N_{\max,P,M})\to 0.
\)

In addition,
\(
\mathbb E[\widehat N_{\max,P,M}]
=
\mathbb E[N_{\max}].
\)
Since \(N_{\max}=\min\{N,T_{\max}\}\uparrow N\) as \(T_{\max}\to\infty\), the monotone convergence theorem \citep{rudin2021principles} gives
\(
\mathbb E[N_{\max}]
\to
\mathbb E[N].
\)
Hence,
\[
\widehat N_{\max,P,M}
-
\mathbb E[N]
=
\left(
\widehat N_{\max,P,M}
-
\mathbb E[N_{\max}]
\right)
+
\left(
\mathbb E[N_{\max}]
-
\mathbb E[N]
\right).
\]
The first term converges to \(0\) in probability by Chebyshev's inequality, and the second term converges to \(0\) deterministically. Therefore,
\(
\widehat N_{\max,P,M}
\xrightarrow{\mathbb P}
\mathbb E[N].
\)
\end{proof}
\begin{corollary}[Consistency of GenPPL]
\label{cor:consistency_genppl_bf}
Let
\(
\widehat{\mathrm{CE}}^\star_{\max,P,M}
=
\frac{1}{PM}\sum_{p=1}^P\sum_{i=1}^M S_{\max}^{(p,i)}
\)
be the Monte Carlo estimator of \(\mathrm{CE}^\star\), and let
\(
\widehat N_{\max,P,M}
:=
\frac{1}{PM}\sum_{p=1}^P\sum_{i=1}^M N_{\max}^{(p,i)}
\)
be the corresponding estimator of \(\mathbb E[N]\). Since
\(
\widehat{\mathrm{CE}}^\star_{\max,P,M}
\xrightarrow{\mathbb P}
\mathrm{CE}^\star,
\) (by \autoref{thm:unified_tmstar_prompt})
\(
\widehat N_{\max,P,M}
\xrightarrow{\mathbb P}
\mathbb E[N],
\) (by Lemma~\autoref{lem:nmax_consistency}). Define
\(
\widehat{\mathrm{GenPPL}}_{\max,P,M}
:=
\exp\left(
\frac{
\widehat{\mathrm{CE}}^\star_{\max,P,M}
}{
\widehat N_{\max,P,M}
}
\right),
\;
\mathrm{GenPPL}
:=
\exp\left(
\frac{\mathrm{CE}^\star}{\mathbb E[N]}
\right).
\)
Then
\(
\widehat{\mathrm{GenPPL}}_{\max,P,M}
\xrightarrow{\mathbb P}
\mathrm{GenPPL}.
\)
\end{corollary}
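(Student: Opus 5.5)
The plan is a direct application of the continuous mapping theorem to the pair of estimators. First, I will combine the two marginal convergences into a joint one. By \autoref{thm:unified_tmstar_prompt}, $\widehat{\mathrm{CE}}^\star_{\max,P,M}\xrightarrow{\mathbb P}\mathrm{CE}^\star$, and by Lemma~\ref{lem:nmax_consistency}, $\widehat N_{\max,P,M}\xrightarrow{\mathbb P}\mathbb E[N]$. Both limits are deterministic constants, so the vector $(\widehat{\mathrm{CE}}^\star_{\max,P,M},\widehat N_{\max,P,M})$ converges in probability to $(\mathrm{CE}^\star,\mathbb E[N])$. To see this, apply a union bound to the events $\{|\widehat{\mathrm{CE}}^\star_{\max,P,M}-\mathrm{CE}^\star|>\epsilon/2\}$ and $\{|\widehat N_{\max,P,M}-\mathbb E[N]|>\epsilon/2\}$. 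The standing hypotheses of both results are assumed: diminishing truncation bias, modest truncation length, and $\mathbb E[N]<\infty$ with $\sqrt{\mathbb E[N_{\max}^2]/P}\to 0$.

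Next, I will apply the continuous mapping theorem to the map $\phi(a,b)=\exp(a/b)$. This map is continuous on $\mathbb R\times(0,\infty)$. The limit point $(\mathrm{CE}^\star,\mathbb E[N])$ lies in this domain because $N\ge 1$ almost surely, so $\mathbb E[N]\ge 1>0$. The theorem only requires $\phi$ to be continuous at the limit point, which it is. Therefore $\phi(\widehat{\mathrm{CE}}^\star_{\max,P,M},\widehat N_{\max,P,M})\xrightarrow{\mathbb P}\phi(\mathrm{CE}^\star,\mathbb E[N])=\mathrm{GenPPL}$.

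The main technical point is the denominator, and it causes no difficulty here. Since $N_{\max}^{(p,i)}\ge 1$ for every rollout, $\widehat N_{\max,P,M}\ge 1$ pathwise. So the ratio is always well defined, and there is no event near $b=0$ that needs separate handling. If one prefers an explicit $\epsilon$--$\delta$ argument, fix the compact neighborhood $[\mathrm{CE}^\star-1,\mathrm{CE}^\star+1]\times[1,\mathbb E[N]+1]$. On this set $\phi$ is uniformly continuous, and the estimators fall inside it with probability tending to one.

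Finally, I will check that this $\mathrm{GenPPL}$ matches \autoref{def:GenPPL}. Read with expectations over $p_X$ in place of the sums, the definition's numerator is $\mathbb E_X[H(N,Y_{1:N}\mid X=x)]=H(N,Y_{1:N}\mid X)=\mathrm{CE}^\star$ by \autoref{thm:entropy}. Its denominator is $\mathbb E_X\mathbb E[N\mid X]=\mathbb E[N]$. So the two definitions agree, and this completes the proof.
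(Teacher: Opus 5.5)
Your proposal is correct and follows the paper's route: both take the two consistency results as given and apply the continuous mapping theorem to $f(a,b)=\exp(a/b)$, which is continuous at $(\mathrm{CE}^\star,\mathbb E[N])$ because $\mathbb E[N]>0$. Your additions fill in steps the paper leaves implicit without changing the argument: joint convergence via a union bound (valid because both limits are constants), the pathwise bound $\widehat N_{\max,P,M}\ge 1$, and the check that $\exp(\mathrm{CE}^\star/\mathbb E[N])$ agrees with the definition of GenPPL.
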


\begin{proof}
We first prove consistency of GenPPL. Since we have
\(
\widehat{\mathrm{CE}}^\star_{\max,P,M}
\xrightarrow{\mathbb P}
\mathrm{CE}^\star
\;\text{and}\;
\widehat N_{\max,P,M}
\xrightarrow{\mathbb P}
\mathbb E[N].
\) and \(\mathbb E[N]>0\), the function
\(
f(a,b)=\exp(a/b)
\)
is continuous at \((\mathrm{CE}^\star,\mathbb E[N])\). Therefore, by the continuous mapping theorem \citep{mann1943stochastic},
\(
\exp\left(
\frac{
\widehat{\mathrm{CE}}^\star_{\max,P,M}
}{
\widehat N_{\max,P,M}
}
\right)
\xrightarrow{\mathbb P}
\exp\left(
\frac{\mathrm{CE}^\star}{\mathbb E[N]}
\right).
\)
Hence,
\(
\widehat{\mathrm{GenPPL}}_{\max,P,M}
\xrightarrow{\mathbb P}
\mathrm{GenPPL}
\), which completes the proof.
\end{proof}

\begin{corollary}[Consistency of the BF estimator]\label{cor:bf_consistency}
Let
\(
\widehat L_{\max,P,M}
:=
\frac{1}{PM}
\sum_{p=1}^P\sum_{i=1}^M
\frac{S_{\max}^{(p,i)}}{N_{\max}^{(p,i)}},
\;
\widehat{\mathrm{BF}}_{\max,P,M}
:=
\exp\!\left(\widehat L_{\max,P,M}\right),
\)
where
\(
S_{\max}^{(p,i)}
=
\sum_{t=1}^{N_{\max}^{(p,i)}}
\widetilde H(G_t^{(p,i)}\mid x_p),
\;
N_{\max}^{(p,i)}
=
\min\{N^{(p,i)},T_{\max}\}.
\)
Assume that $N\ge 1$ almost surely and that the normalized truncated entropy variables
\(
r_{\max}
:=
\frac{S_{\max}}{N_{\max}}
\)
are integrable. Then, for fixed $T_{\max}$, as $P,M\to\infty$,
\(
\widehat L_{\max,P,M}
\xrightarrow{\mathbb P}
L_{\max}
:=
\mathbb E\!\left[
\frac{S_{\max}}{N_{\max}}
\right],
\)
and therefore, by the continuous mapping theorem,
\(
\widehat{\mathrm{BF}}_{\max,P,M}
\xrightarrow{\mathbb P}
\mathrm{BF}_{\max}
:=
\exp(L_{\max}).
\)
Moreover, if
\(
\frac{S_{\max}}{N_{\max}}
\xrightarrow[T_{\max}\to\infty]{a.s.}
\frac{S}{N}
\)
and the sequence $\{S_{\max}/N_{\max}\}_{T_{\max}\ge 1}$ is uniformly integrable, then
\(
L_{\max}\to
L
:=
\mathbb E\!\left[
\frac{1}{N}
\sum_{t=1}^{N}
\widetilde H(G_t\mid X)
\right],
\)
and consequently
\(
\mathrm{BF}_{\max}
\to
\mathrm{BF}
:=
\exp(L).
\)
Thus, under these conditions,
\(
\widehat{\mathrm{BF}}_{\max,P,M}
\xrightarrow{\mathbb P}
\mathrm{BF}
\)
as $P,M\to\infty$ and $T_{\max}\to\infty$.
\end{corollary}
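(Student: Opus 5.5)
We prove the three claims in order: consistency of $\widehat L_{\max,P,M}$ for fixed $T_{\max}$, then convergence of $L_{\max}$ to $L$, then a combination of the two.

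\textbf{Fixed $T_{\max}$.} Write $r^{(p,i)}_{\max}:=S^{(p,i)}_{\max}/N^{(p,i)}_{\max}$. Since $N\ge 1$ a.s., the denominator is at least one. Each summand is pathwise bounded, because $0\le S_{\max}\le H_{\max}N_{\max}$ gives $0\le r_{\max}\le H_{\max}=\log|\mathcal Z|$. So integrability holds automatically, and even the second moment is bounded by $H_{\max}^2$.

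We then repeat the hierarchical variance argument from the proof of \autoref{thm:unified_tmstar_prompt} and Lemma~\ref{lem:nmax_consistency}. Let $\lambda_{\max}(x):=\mathbb E[r_{\max}\mid X=x]$. The law of total variance across i.i.d.\ prompts and conditionally independent rollouts gives
\[
\operatorname{Var}(\widehat L_{\max,P,M})=\tfrac1P\operatorname{Var}(\lambda_{\max}(X))+\tfrac{1}{PM}\mathbb E[\operatorname{Var}(r_{\max}\mid X)]\le H_{\max}^2\left(\tfrac1P+\tfrac1{PM}\right).
\]
The estimator is unbiased for $L_{\max}$, so Chebyshev yields $\widehat L_{\max,P,M}\xrightarrow{\mathbb P}L_{\max}$. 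Since $\exp$ is continuous, the continuous mapping theorem gives $\widehat{\mathrm{BF}}_{\max,P,M}\xrightarrow{\mathbb P}\mathrm{BF}_{\max}$.

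\textbf{Truncation limit.} The a.s.\ convergence $S_{\max}/N_{\max}\to S/N$ is in fact automatic. On $\{N<\infty\}$, once $T_{\max}\ge N$ we have $N_{\max}=N$ and $S_{\max}=S$, so the sequence is eventually constant. Uniform integrability is also automatic, by the uniform bound $H_{\max}$. Vitali's theorem, or simply dominated convergence with dominating constant $H_{\max}$, then gives $L_{\max}\to L=\mathbb E[S/N]$. Continuity of $\exp$ gives $\mathrm{BF}_{\max}\to\mathrm{BF}$.

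One bookkeeping step remains: identifying $L$ with the paper's definition of BF as $\exp(\mathbb E_X\mathbb E_N[r_N\mid X])$. This is the step I expect to need the most care. The corollary \emph{defines} the limit as $\exp(\mathbb E[S/N])$, so I would prove convergence to that quantity. I would remark only that $\mathbb E[S\mid N=n,X]$ plays the role of $H(Y_{1:N}\mid N=n,X)$ up to the length-uncertainty contribution, which is the same chain-rule accounting as in Theorem~\ref{thm:entropy}. I would not claim an exact identity there.

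\textbf{Joint limit.} When $P,M\to\infty$ and $T_{\max}\to\infty$ together, split the error as
\[
|\widehat L_{\max,P,M}-L|\le|\widehat L_{\max,P,M}-L_{\max}|+|L_{\max}-L|.
\]
The first term is $O_{\mathbb P}\big(H_{\max}(1/P+1/(PM))^{1/2}\big)$ uniformly in $T_{\max}$, because the variance bound above does not depend on $T_{\max}$. The second term is deterministic and tends to zero by the truncation step. Hence $\widehat L_{\max,P,M}\xrightarrow{\mathbb P}L$, and the continuous mapping theorem finishes the proof. Unlike the $\mathrm{CE}^\star$ case, no rate condition linking $T_{\max}$ to $P,M$ is needed, because of the uniform bound.
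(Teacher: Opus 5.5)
Your proof is correct, but it takes a different and somewhat sharper route than the paper.

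\textbf{The paper's route.} The paper stays at the level of the stated hypotheses. For fixed $T_{\max}$ it invokes the law of large numbers on the rollout-level ratios, using only their integrability. It then uses the assumed almost-sure convergence plus uniform integrability to get $L_{\max}\to L$. It stops there, so the joint limit in $(P,M,T_{\max})$ is asserted rather than argued.

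\textbf{Your route.} You note that $0\le S_{\max}\le H_{\max}N_{\max}$ forces $r_{\max}\in[0,H_{\max}]$ pathwise. This makes the integrability and uniform-integrability hypotheses redundant. You then replace the LLN by the hierarchical variance decomposition from the proof of Theorem~\ref{thm:unified_tmstar_prompt}, obtaining $\operatorname{Var}(\widehat L_{\max,P,M})\le H_{\max}^2(1/P+1/(PM))$.

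\textbf{What your route gains.} Because this bound does not depend on $T_{\max}$, your triangle-inequality step genuinely proves the joint limit along any sequence with $P\to\infty$ and $T_{\max}\to\infty$. It also gives an explicit Monte Carlo rate and needs no coupling condition. In addition, it sidesteps two facts that the paper's one-line LLN appeal glosses over: the $PM$ summands are not i.i.d., and their prompt-level averages change law with $M$. The paper's integrability-only formulation would matter only if per-step entropies could be unbounded (e.g.\ an infinite vocabulary), which the framework excludes since $|\mathcal Z|<\infty$.

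\textbf{Two small remarks.}
\begin{enumerate}
\item Your claim that the almost-sure convergence is automatic uses $N<\infty$ a.s. That is not among the stated hypotheses; only $N\ge 1$ is. It is, however, implicit in $L$ being well defined, and the statement grants that convergence anyway.
\item Your caution about identifying $\mathbb E[S/N]$ with the BF of Definition~\ref{def:BF} is well founded. Suppose the first step puts mass $1/2$ on \texttt{EOS} and $1/2$ on a token that is then followed by \texttt{EOS} with probability one. Then $H(Y_{1:N}\mid N=n,X)=0$ for both lengths, while $\mathbb E[S/N]=\tfrac34\log 2$. So the corollary, and the paper's proof of it, target $\exp(\mathbb E[S/N])$, which is exactly what you prove convergence to.
\end{enumerate}
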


\begin{proof}
For fixed $T_{\max}$, the variables
\(
\hat{r}_{\max}^{(p,i)}
=
\frac{S_{\max}^{(p,i)}}{N_{\max}^{(p,i)}}
\)
are integrable rollout-level observations. Since $N\ge 1$ almost surely, the denominator is bounded away from zero. Applying the law of large numbers to the empirical average over prompts and rollouts gives
\(
\widehat L_{\max,P,M}
=
\frac{1}{PM}
\sum_{p=1}^P\sum_{i=1}^M
\hat r_{\max}^{(p,i)}
\xrightarrow{\mathbb P}
\mathbb E[r_{\max}]
=
L_{\max}.
\)
Since the exponential map is continuous,
\(
\widehat{\mathrm{BF}}_{\max,P,M}
=
\exp(\widehat L_{\max,P,M})
\xrightarrow{\mathbb P}
\exp(L_{\max})
=
\mathrm{BF}_{\max}.
\)
Finally, if $S_{\max}/N_{\max}\to S/N$ almost surely and the normalized entropy ratios are uniformly integrable, then convergence of expectations follows, yielding $L_{\max}\to L$. Another application of continuity of the exponential function gives $\mathrm{BF}_{\max}\to\mathrm{BF}$.
\end{proof}

\paragraph{Empirical support for the truncation condition.}
The consistency result above requires the truncation bias induced by $T_{\max}$ to vanish, or at least to be negligible in finite-sample estimation. While this is a population-level condition and cannot be proven from data alone, our diagnostics provide empirical support for it in our experimental setting. With $T_{\max}=4000$, the observed truncation rates are uniformly small across tasks, model families, and variants: most are below $1\%$, and the largest observed rate is $4.32\%$ (see \autoref{app:stopping_criteria}). In addition, the empirical length distributions concentrate well below the truncation threshold (see \autoref{fig:sequence_length_over}), and the number of active rollouts decays rapidly with token position (see \autoref{fig:entropy_rate}). Together with the bounded observed entropy-rate trajectories (see \autoref{fig:entropy_rate}), these results suggest that the normalized entropy ratios are well behaved and that the finite-$T_{\max}$ truncation bias is small in practice.

\paragraph{Estimating entropy rate.}
For each rollout $(p,i)$, Algorithm~\ref{alg:tmstar_est} returns the truncated stopping time
\(
N_{\max}^{(p,i)}=\min\{N^{(p,i)},T_{\max}\}
\)
and the accumulated truncated canopy entropy
\(
S_{\max}^{(p,i)}
=
\sum_{t=1}^{N_{\max}^{(p,i)}}
\widetilde H(G_t^{(p,i)}\mid x_p).
\)
We estimate the rollout-level entropy rate by
\(
\widehat r_{\max}^{(p,i)}
:=
\frac{S_{\max}^{(p,i)}}{N_{\max}^{(p,i)}}.
\)
Thus, for each prompt $x_p$, the prompt-specific length--entropy relationship is estimated from the rollout-level pairs
\(
\left\{
\left(
N_{\max}^{(p,i)},
\widehat r_{\max}^{(p,i)}
\right)
\right\}_{i=1}^M .
\)

\paragraph{Length--entropy correlation.} \autoref{def:correlation} introduced the length--entropy correlation
$\rho(N,r_N)$ at a conceptual level. We now provide its statistically rigorous
formulation together with the corresponding aggregation procedure.
\begin{definition}[length--entropy correlation]\label{def:correlation_appendix}
For a fixed prompt \(X=x\), define the prompt-conditional Pearson correlation
$\rho_X:=\rho(N,r_N\mid X=x)=\frac{\operatorname{Cov}(N,r_N\mid X=x)}{\sqrt{\operatorname{Var}(N\mid X=x)\operatorname{Var}(r_N\mid X=x)}}.$ Since the prompt-conditional correlation $\rho_X$ is bounded and nonlinear, directly averaging correlations across prompts may introduce bias. Therefore, we aggregate the prompt-conditional Pearson correlations on the Fisher-$z$\footnote{This transformation maps correlation coefficients from $[-1,1]$ to $\mathbb{R}$ and stabilizes their variance, making them approximately normally distributed with variance $1/(n_p - 3)$. $n_p$ is the number of rollouts associated with each prompt $p$. Directly averaging correlations would lead to biased estimates due to the nonlinear and bounded nature of correlation coefficients. See \autoref{alg:pc_corr_est} for details.
} 
scale to obtain a prompt-averaged correlation measure. Then the (prompt-averaged) length--entropy correlation 
$\rho(N, r_N):=\tanh\left(\frac{\mathbb{E}_{X\sim p_X}\left[w_X\operatorname{atanh}(\rho_X)\right]}{\mathbb{E}_{X\sim p_X}[w_X]}\right),$ where \(w_X\ge 0\) is a prompt-specific reliability weight.
\end{definition}

\paragraph{Rank-based length--entropy correlations.}
In addition to Pearson correlation, which measures linear association, we also report two rank-based measures. Spearman correlation \citep{spearman1961proof} is Pearson correlation applied to the ranked variables and therefore captures monotone dependence. Kendall's $\tau_b$ \citep{kendall1938new} measures ordinal association by comparing concordant and discordant rollout pairs, with tie adjustment. Since Kendall's $\tau_b$ is defined through pairwise comparisons, we aggregate it using pair-count weights rather than Fisher-$z$ averaging. {We estimate and report rank-based length--entropy correlations to assess the robustness of our findings. The consistently positive relationship across different correlation measures suggests that the observed trend is not an artifact of any particular notion of correlation.}

\begin{algorithm}[ht]
\caption{Estimating prompt-averaged length--entropy correlations}
\label{alg:pc_corr_est}
\KwIn{
Prompt distribution $p_X$; number of prompts $P$; rollout counts $\{n_p\}_{p=1}^P$; maximum length $T_{\max}$; sampling policy $\pi$; clipping constant $\epsilon>0$.
}

\For{$p=1$ \KwTo $P$}{
    Sample prompt $x_p\sim p_X$;\\

    \For{$i=1$ \KwTo $n_p$}{
        Generate one rollout using Algorithm~\ref{alg:tmstar_est};\\
        Record
        \(
        N_{\max}^{(p,i)}
        =
        \min\{N^{(p,i)},T_{\max}\},
        \;
        S_{\max}^{(p,i)}
        =
        \sum_{t=1}^{N_{\max}^{(p,i)}}
        \widetilde H(G_t^{(p,i)}\mid x_p).
        \)
        Compute the rollout-level entropy rate
        \(
        r_{\max}^{(p,i)}
        =
        \frac{S_{\max}^{(p,i)}}{N_{\max}^{(p,i)}}.
        \)
    }

    Compute the prompt-specific Pearson correlation
    \(
    \widehat\rho^{\mathrm{P}}_p
    =
    \operatorname{Corr}
    \left(
    \{N_{\max}^{(p,i)}\}_{i=1}^{n_p},
    \{r_{\max}^{(p,i)}\}_{i=1}^{n_p}
    \right).
    \)

    Compute the prompt-specific Spearman correlation
    \(
    \widehat\rho^{\mathrm{S}}_p
    =
    \operatorname{Corr}
    \left(
    \{\operatorname{rank}(N_{\max}^{(p,i)})\}_{i=1}^{n_p},
    \{\operatorname{rank}(r_{\max}^{(p,i)})\}_{i=1}^{n_p}
    \right),
    \)
    using mid-ranks for ties.\\

    Compute the prompt-specific Kendall's $\tau_b$
    \(
    \widehat\tau^{\mathrm{K}}_p
    =
    \operatorname{KendallTauB}
    \left(
    \{N_{\max}^{(p,i)}\}_{i=1}^{n_p},
    \{r_{\max}^{(p,i)}\}_{i=1}^{n_p}
    \right).
    \)

    Compute the prompt-specific covariance
    \(
    \widehat{\operatorname{Cov}}_p
    =
    \widehat{\operatorname{Cov}}
    \left(
    \{N_{\max}^{(p,i)}\}_{i=1}^{n_p},
    \{r_{\max}^{(p,i)}\}_{i=1}^{n_p}
    \right).
    \)
}

Aggregate Pearson and Spearman using Fisher-$z$ weighted averages:
\(
w^{\mathrm{F}}_p
=
\max\{n_p-3,0\},
\)
\(
\widehat\rho^{\mathrm{P}}_{\mathrm{pc}}
=
\tanh\left(
\frac{
\sum_{p=1}^P
w^{\mathrm{F}}_p
\operatorname{atanh}
\left(
\operatorname{clip}
(\widehat\rho^{\mathrm{P}}_p,-1+\epsilon,1-\epsilon)
\right)
}{
\sum_{p=1}^P w^{\mathrm{F}}_p
}
\right), 
\) and 
\(
\widehat\rho^{\mathrm{S}}_{\mathrm{pc}}
=
\tanh\left(
\frac{
\sum_{p=1}^P
w^{\mathrm{F}}_p
\operatorname{atanh}
\left(
\operatorname{clip}
(\widehat\rho^{\mathrm{S}}_p,-1+\epsilon,1-\epsilon)
\right)
}{
\sum_{p=1}^P w^{\mathrm{F}}_p
}
\right).
\)

Aggregate Kendall's $\tau_b$ using pair-count weights:\(
w^{\mathrm{K}}_p
=
\max\left\{
\frac{n_p(n_p-1)}{2},
0
\right\},
\)\(
\widehat\tau^{\mathrm{K}}_{\mathrm{pc}}
=
\frac{
\sum_{p=1}^P
w^{\mathrm{K}}_p
\widehat\tau^{\mathrm{K}}_p
}{
\sum_{p=1}^P
w^{\mathrm{K}}_p
}.
\)

Aggregate covariance using degrees-of-freedom weights: \(
w^{\mathrm{Cov}}_p
=
\max\{n_p-1,0\},
\) and \(
\widehat{\operatorname{Cov}}_{\mathrm{pc}}
=
\frac{
\sum_{p=1}^P
w^{\mathrm{Cov}}_p
\widehat{\operatorname{Cov}}_p
}{
\sum_{p=1}^P
w^{\mathrm{Cov}}_p
}.
\)

\Return{
\(
\widehat\rho^{\mathrm{P}}_{\mathrm{pc}},
\;
\widehat\rho^{\mathrm{S}}_{\mathrm{pc}},
\;
\widehat\tau^{\mathrm{K}}_{\mathrm{pc}},
\;
\widehat{\operatorname{Cov}}_{\mathrm{pc}}.
\)
}
\end{algorithm}

\begin{proposition}[Consistency of prompt-averaged length--entropy correlation estimators]
\label{prop:pc_corr_consistency}
For each prompt $X=x$, define
\(
r_{\max}(x)
:=
\frac{S_{\max}(x)}{N_{\max}(x)},
\;
S_{\max}(x)
=
\sum_{t=1}^{N_{\max}(x)}
\widetilde H(G_t\mid x),
\;
N_{\max}(x)
=
\min\{N(x),T_{\max}\}
\). Noted that in this case, $S_{\max}(x)$ and $N_{\max}(x)$ are random variables which depend on the generated history $G_t$.
Let
\(
\rho^{\mathrm P}_{\max}(x)
:=
\operatorname{Corr}(N_{\max}(x),r_{\max}(x)\mid X=x)
\)
denote the Pearson correlation, let
\(
\rho^{\mathrm S}_{\max}(x)
:=
\operatorname{Corr}
\left(
F_{N_{\max}\mid x}(N_{\max}),
F_{r_{\max}\mid x}(r_{\max})
\mid X=x
\right)
\)
denote the population Spearman rank correlation, and let
\(
\tau^{\mathrm K}_{\max}(x)
\)
denote Kendall's $\tau_b$ between $N_{\max}$ and $r_{\max}$ conditional on $X=x$.
For sampled prompts $x_1,\ldots,x_p$, suppose that for each prompt $x_p$ we observe
$n_p$ conditionally i.i.d.\ rollouts and compute the prompt-specific estimates
\(
\widehat \rho^{\mathrm P}_p,\;
\widehat \rho^{\mathrm S}_p,\;
\widehat \tau^{\mathrm K}_p,\;
\widehat{\operatorname{Cov}}_p
\)
as in Algorithm~\ref{alg:pc_corr_est}. 
Assume that $N_{\max}\ge 1$ almost surely. For Pearson and covariance, assume that the conditional fourth moments of $N_{\max}$ and $r_{\max}$ are uniformly bounded and that
\(
0<c\le
\operatorname{Var}(N_{\max}\mid X=x),
\;
0<c\le
\operatorname{Var}(r_{\max}\mid X=x)
\)
uniformly over $x$. For Spearman and Kendall, assume the corresponding rank-correlation functionals are continuous at the conditional law of $(N_{\max},r_{\max})\mid X=x$, uniformly over $x$; in particular, ties are either absent or handled by the mid-rank and $\tau_b$ conventions used in Algorithm~\ref{alg:pc_corr_est}. Finally, assume
\(
|\rho^{\mathrm P}_{\max}(x)|\le 1-\epsilon_0,
\;
|\rho^{\mathrm S}_{\max}(x)|\le 1-\epsilon_0
\)
for some $\epsilon_0>0$, and that $\min_p n_p\to\infty$ as $P\to\infty$.
Then, as $P\to\infty$ and $\min_p n_p\to\infty$,
\(
\widehat\rho^{\mathrm P}_{\mathrm{pc},\max}
\xrightarrow{\mathbb P}
\rho^{\mathrm P}_{\mathrm{pc},\max},
\;
\widehat\rho^{\mathrm S}_{\mathrm{pc},\max}
\xrightarrow{\mathbb P}
\rho^{\mathrm S}_{\mathrm{pc},\max},
\)
where
\(
\rho^{\mathrm P}_{\mathrm{pc},\max}
=
\tanh\left(
\frac{
\mathbb E_X[
w^{\mathrm F}(X)\operatorname{atanh}\{\rho^{\mathrm P}_{\max}(X)\}]
}{
\mathbb E_X[w^{\mathrm F}(X)]
}
\right),
\)
\(
\rho^{\mathrm S}_{\mathrm{pc},\max}
=
\tanh\left(
\frac{
\mathbb E_X[
w^{\mathrm F}(X)\operatorname{atanh}\{\rho^{\mathrm S}_{\max}(X)\}]
}{
\mathbb E_X[w^{\mathrm F}(X)]
}
\right).
\)
Also,
\(
\widehat\tau^{\mathrm K}_{\mathrm{pc},\max}
\xrightarrow{\mathbb P}
\tau^{\mathrm K}_{\mathrm{pc},\max}
:=
\frac{
\mathbb E_X[w^{\mathrm K}(X)\tau^{\mathrm K}_{\max}(X)]
}{
\mathbb E_X[w^{\mathrm K}(X)]
},
\)
and
\(
\widehat{\operatorname{Cov}}_{\mathrm{pc},\max}
\xrightarrow{\mathbb P}
\operatorname{Cov}_{\mathrm{pc},\max}
:=
\frac{
\mathbb E_X[w^{\mathrm{Cov}}(X)
\operatorname{Cov}(N_{\max},r_{\max}\mid X)]
}{
\mathbb E_X[w^{\mathrm{Cov}}(X)]
}.
\)
If additionally $T_{\max}\to\infty$ and the corresponding truncated targets converge to their untruncated counterparts, then the same estimators are consistent for the corresponding prompt-averaged untruncated Pearson, Spearman, Kendall, and covariance targets.
\end{proposition}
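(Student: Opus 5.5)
The plan has three layers: make each prompt-level estimate converge uniformly, pass the uniform error through the Fisher-$z$ transform, and then use an ordinary weighted law of large numbers across prompts.

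\textbf{Step 1 (within-prompt consistency, uniform in $x$).} Fix $x$. The rollouts $(N_{\max}^{(p,i)},r_{\max}^{(p,i)})_{i\le n_p}$ are conditionally i.i.d. given $X_p=x$. Since $N_{\max}\ge 1$, we have $0\le r_{\max}\le H_{\max}$, so $r_{\max}$ is automatically bounded.

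For covariance, $\widehat{\operatorname{Cov}}_p$ is built from sample first and second (cross) moments. With conditional fourth moments uniformly bounded, Chebyshev gives
\[
\sup_x \mathbb P\bigl(|\widehat{\operatorname{Cov}}_p-\operatorname{Cov}(N_{\max},r_{\max}\mid x)|>\delta \mid X_p=x\bigr)\le C/(n_p\delta^2).
\]
The same bound holds for the sample variances. Pearson is the map $(c,v_1,v_2)\mapsto c/\sqrt{v_1v_2}$, which is Lipschitz on $\{v_1,v_2\ge c/2\}$ with a constant independent of $x$. The uniform variance floor $c$ keeps us in that region with conditional probability $1-O(1/n_p)$, so $\widehat\rho^{\mathrm P}_p-\rho^{\mathrm P}_{\max}(X_p)\to 0$ in probability uniformly in $x$.

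For Spearman and Kendall, I would use the U-statistic (or empirical-copula) representation. Kendall's $\tau_b$ is a ratio of U-statistics with bounded kernels, so Hoeffding's U-statistic bound gives uniform rates. Spearman with mid-ranks equals a continuous functional of the empirical distribution. The assumed uniform continuity of the rank functionals at the conditional laws, together with a Glivenko--Cantelli/DKW bound (which is distribution-free, hence uniform in $x$), yields uniform convergence.

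\textbf{Step 2 (Fisher-$z$ transfer).} Since $|\rho_{\max}(x)|\le 1-\epsilon_0$, on the event $|\widehat\rho_p-\rho_{\max}(X_p)|\le\epsilon_0/2$ both values lie in $[-1+\epsilon_0/2,\,1-\epsilon_0/2]$. On that interval $\operatorname{atanh}$ is Lipschitz, and the clipping at $\epsilon<\epsilon_0/2$ is inactive. So $\operatorname{atanh}(\operatorname{clip}\widehat\rho_p)-\operatorname{atanh}\rho_{\max}(X_p)\to 0$ uniformly in probability. Off that event the difference is still bounded by $\operatorname{atanh}(1-\epsilon)+\operatorname{atanh}(1-\epsilon_0)$, so the errors are also uniformly bounded, which gives $L^1$ convergence.

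\textbf{Step 3 (across-prompt aggregation).} Write each aggregate as $A_P/B_P$ with
\[
A_P=\frac{1}{P}\sum_p \tilde w_p\,\widehat\theta_p,\qquad B_P=\frac{1}{P}\sum_p\tilde w_p ,
\]
where $\widehat\theta_p$ is the Fisher-$z$ value, the Kendall estimate, or the covariance estimate, and $\tilde w_p$ is the weight normalized so that it converges to $w(X_p)$. I would read $w(X)$ as a bounded prompt-level weight, since the stated population target is defined in terms of it.

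Split $A_P=\frac1P\sum_p\tilde w_p\theta(X_p)+\frac1P\sum_p\tilde w_p(\widehat\theta_p-\theta(X_p))$. The second term has expectation tending to zero by Step 2's uniform $L^1$ bound (for covariance, use the uniformly bounded moments for integrability), so it vanishes in probability by Markov. The first term converges by the weak LLN over i.i.d. prompts, because $\theta(X)$ is bounded. Then the continuous mapping theorem applied to $(a,b)\mapsto\tanh(a/b)$ (or $a/b$ for Kendall and covariance) gives the claim. This requires $\mathbb E_X[w(X)]>0$.

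\textbf{Step 4 (untruncated targets).} The truncated population targets converge to the untruncated ones by assumption. A triangle inequality with a diagonal choice of $T_{\max}(P)\to\infty$ then finishes the proof.

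\textbf{Main obstacle.} The hardest step is the uniformity in Step 1, especially for the rank statistics with ties. Pointwise consistency for each fixed $x$ is not enough, because prompts are resampled at every $P$. I would lean on the distribution-free DKW and Hoeffding bounds together with the assumed uniform continuity, and handle $\tau_b$'s tie denominators by a uniform lower bound implied by that continuity assumption.
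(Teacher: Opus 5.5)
Your proposal is correct and follows the same route as the paper's proof. Both go through the same four steps: prompt-level consistency of the covariance, Pearson and rank statistics; continuity of the clipped Fisher-$z$ map given the $1-\epsilon_0$ margin; a weighted law of large numbers over i.i.d.\ prompts followed by the continuous mapping theorem; and finally the truncation approximation. Your version is more careful than the paper's on three points it compresses into ``the uniform assumptions'' and ``the weighted law of large numbers'': you make the uniformity in $x$ explicit (via Chebyshev, Hoeffding and DKW bounds), you use the boundedness of the clipped errors to get $L^1$ control, and you spell out how the $n_p$-dependent weights are to be read.
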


\begin{proof}
We prove the result in two steps: first within each prompt, and then across prompts.
For a fixed prompt $x_p$, the rollout-level pairs
\(
\left(
N_{\max}^{(p,i)},
r_{\max}^{(p,i)}
\right),
\; i=1,\ldots,n_p,
\)
are conditionally i.i.d. By Algorithm~\ref{alg:pc_corr_est}, both coordinates are observed from the same rollout, with
\(
r_{\max}^{(p,i)}
=
\frac{S_{\max}^{(p,i)}}{N_{\max}^{(p,i)}}.
\)
The moment and nondegeneracy assumptions imply consistency of the sample covariance and variances. Hence,
\(
\widehat\rho^{\mathrm P}_p
\xrightarrow{\mathbb P}
\rho^{\mathrm P}_{\max}(x_p),
\;
\widehat{\operatorname{Cov}}_p
\xrightarrow{\mathbb P}
\operatorname{Cov}(N_{\max},r_{\max}\mid x_p).
\)
Similarly, by consistency of empirical ranks and the assumed continuity of the rank-correlation functionals,
\(
\widehat\rho^{\mathrm S}_p
\xrightarrow{\mathbb P}
\rho^{\mathrm S}_{\max}(x_p),
\;
\widehat\tau^{\mathrm K}_p
\xrightarrow{\mathbb P}
\tau^{\mathrm K}_{\max}(x_p).
\)
Since $|\rho^{\mathrm P}_{\max}(x)|$ and $|\rho^{\mathrm S}_{\max}(x)|$ are bounded away from one, and the clipping constant is chosen smaller than this margin, the Fisher transform is continuous in a neighborhood of the limiting values. Therefore,
\(
\operatorname{atanh}
\left(
\operatorname{clip}
(\widehat\rho^{\mathrm P}_p,-1+\epsilon,1-\epsilon)
\right)
\xrightarrow{\mathbb P}
\operatorname{atanh}\{\rho^{\mathrm P}_{\max}(x_p)\},
\)
and
\(
\operatorname{atanh}
\left(
\operatorname{clip}
(\widehat\rho^{\mathrm S}_p,-1+\epsilon,1-\epsilon)
\right)
\xrightarrow{\mathbb P}
\operatorname{atanh}\{\rho^{\mathrm S}_{\max}(x_p)\}.
\)
Now consider the aggregation across prompts. For Pearson, define
\(
Z^{\mathrm P}_p
=
\operatorname{atanh}\{\rho^{\mathrm P}_{\max}(x_p)\}.
\)
By the previous step and the uniform assumptions,
\(
\frac{
\sum_{p=1}^P w^{\mathrm F}_p
\left[
\operatorname{atanh}
\left(
\operatorname{clip}
(\widehat\rho^{\mathrm P}_p,-1+\epsilon,1-\epsilon)
\right)
-
Z^{\mathrm P}_p
\right]
}{
\sum_{p=1}^P w^{\mathrm F}_p
}
\xrightarrow{\mathbb P}
0.
\)
Since prompts are sampled i.i.d., the weighted law of large numbers gives
\(
\frac{\sum_{p=1}^P w^{\mathrm F}_p Z^{\mathrm P}_p}
{\sum_{p=1}^P w^{\mathrm F}_p}
\xrightarrow{\mathbb P}
\frac{
\mathbb E_X[
w^{\mathrm F}(X)\operatorname{atanh}\{\rho^{\mathrm P}_{\max}(X)\}]
}{
\mathbb E_X[w^{\mathrm F}(X)]
}.
\)
Applying the continuous mapping theorem with $\tanh(\cdot)$ yields
\(
\widehat\rho^{\mathrm P}_{\mathrm{pc},\max}
\xrightarrow{\mathbb P}
\rho^{\mathrm P}_{\mathrm{pc},\max}.
\)
The Spearman result is identical after replacing $\rho^{\mathrm P}_{\max}$ by $\rho^{\mathrm S}_{\max}$.
For Kendall's $\tau_b$, Algorithm~\ref{alg:pc_corr_est} aggregates directly on the $\tau$ scale using pair-count weights
\(
w^{\mathrm K}_p=\frac{n_p(n_p-1)}{2}.
\)
Thus, by the prompt-level consistency of $\widehat\tau^{\mathrm K}_p$ and the weighted law of large numbers,
\(
\widehat\tau^{\mathrm K}_{\mathrm{pc},\max}
=
\frac{\sum_{p=1}^P w^{\mathrm K}_p\widehat\tau^{\mathrm K}_p}
{\sum_{p=1}^P w^{\mathrm K}_p}
\xrightarrow{\mathbb P}
\frac{
\mathbb E_X[w^{\mathrm K}(X)\tau^{\mathrm K}_{\max}(X)]
}{
\mathbb E_X[w^{\mathrm K}(X)]
}.
\)
The covariance result follows in the same way, using weights
$w^{\mathrm{Cov}}_p=(n_p-1)_+$ and the prompt-level consistency of
$\widehat{\operatorname{Cov}}_p$.
Finally, if $T_{\max}\to\infty$ and the truncated conditional covariance, variance, and rank-association functionals converge to their untruncated counterparts, then the truncated prompt-averaged targets converge to the corresponding untruncated targets. Combining this approximation step with the fixed-$T_{\max}$ consistency above completes the proof.
\end{proof}

\paragraph{Empirical support for the consistency assumptions.}
The assumptions underlying \autoref{prop:pc_corr_consistency} are supported by several empirical diagnostics observed in our experiments. First, the truncation rates under $T_{\max}=4000$ are uniformly small across model families, tasks, and variants, with most rates below $1\%$ and the maximum observed rate equal to $4.32\%$ (see \autoref{tab:truncation_rates}). This suggests that the truncated quantities $(N_{\max},r_{\max})$ closely approximate their untruncated counterparts $(N,r_N)$ in practice. Second, the empirical sequence-length distributions exhibit rapidly decaying tails, and the number of active rollouts decreases quickly as generation proceeds, providing evidence that the stopping-time distribution is sufficiently light-tailed and that truncation bias is negligible for the chosen $T_{\max}$ (see \autoref{fig:sequence_length_over}). 

Third, the rollout-level entropy rates remain bounded and stable throughout generation trajectories, supporting the bounded-moment and continuity assumptions required for consistency of the covariance and correlation estimators (see \autoref{fig:entropy_rate}). Finally, substantial variability is consistently observed in both generation length and entropy rate across prompts and rollouts, indicating that the conditional variances of $N_{\max}$ and $r_{\max}$ are bounded away from zero with high probability (see \autoref{fig:entropy_rate_var} and \autoref{fig:length_var}). Together, these empirical observations suggest that the regularity conditions required for the consistency of the prompt-averaged Pearson, Spearman, Kendall, and covariance estimators are well approximated in our experimental regime.

\begin{figure}[t]
    \centering
    \includegraphics[width=1\linewidth]{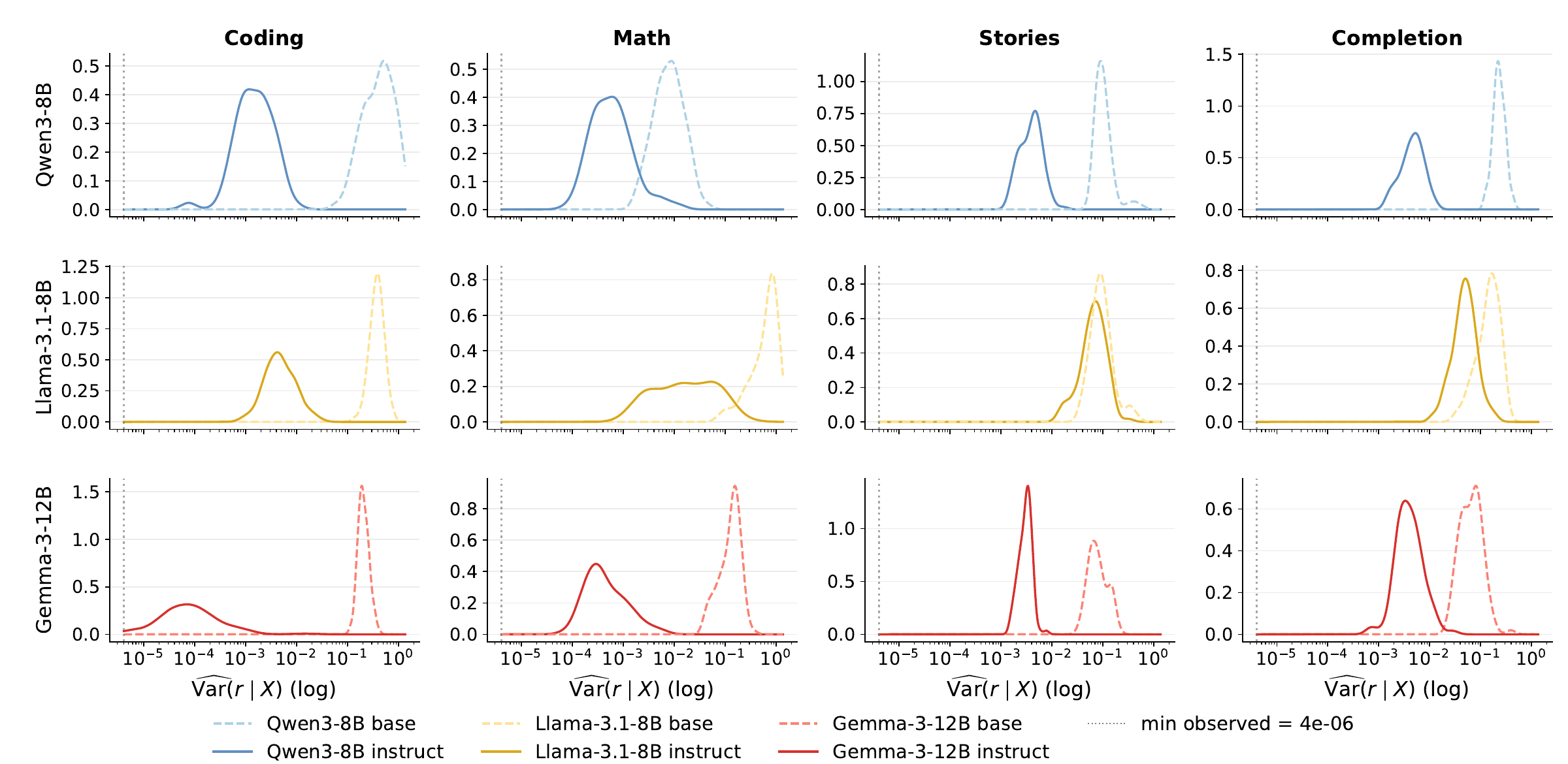}
    \caption{Gaussian KDEs of $\log \widehat{\mathrm{Var}}(r\mid x_p)$ over $P{=}100$ prompts. The dotted vertical marks $\min_p \widehat{\mathrm{Var}}(r\mid x_p)\!\approx\!4\!\times\!10^{-6}$. All densities sit well to the right of zero, providing empirical support for the bounded-away-from-zero assumption.}
    \label{fig:entropy_rate_var}
\end{figure}
\begin{figure}[t]
    \centering
    \includegraphics[width=1\linewidth]{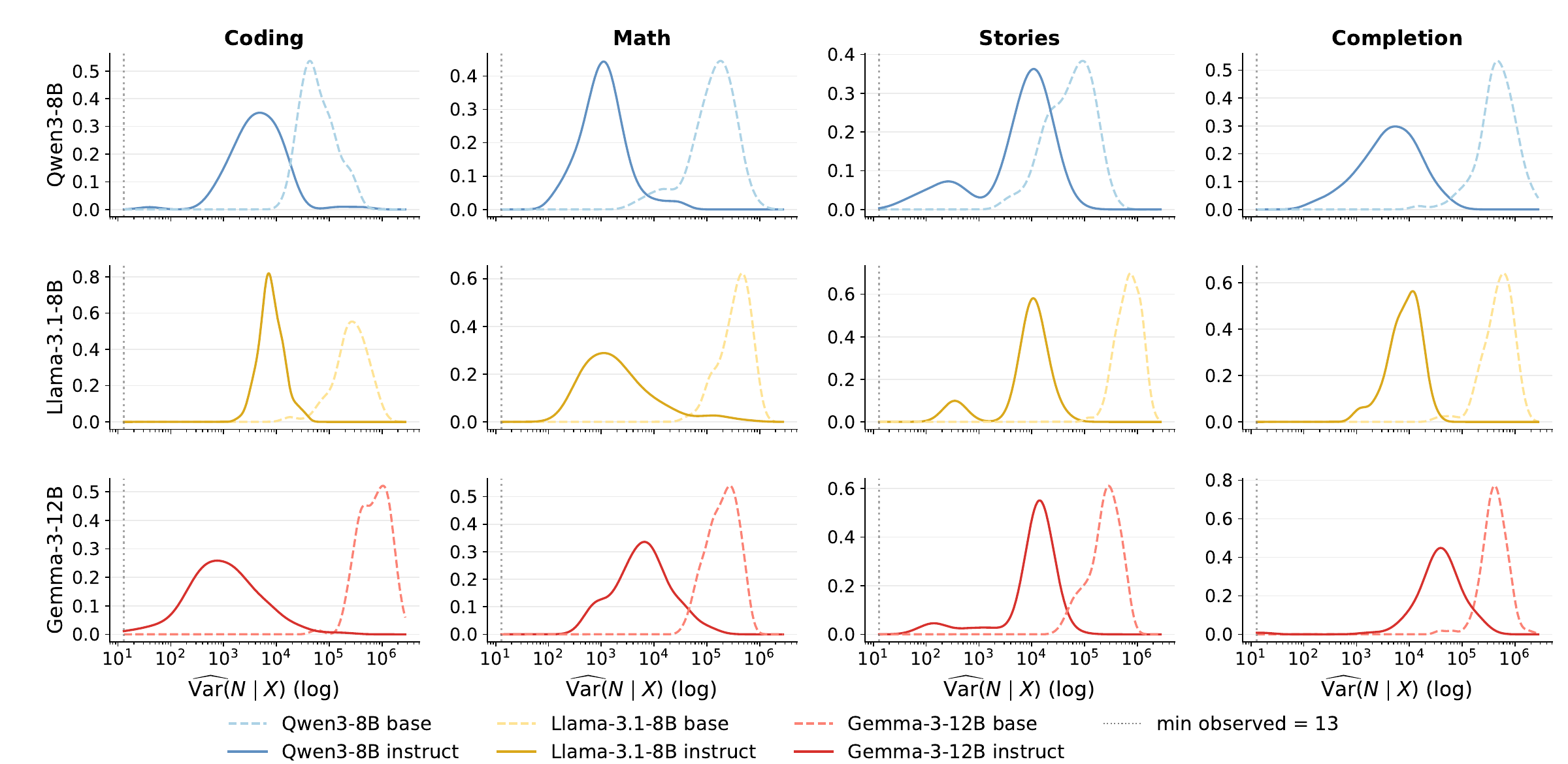}
    \caption{Gaussian KDEs of $\log \widehat{\mathrm{Var}}(N\mid x_p)$ over $P{=}100$ prompts. The dotted vertical line marks
    $\min_p \widehat{\mathrm{Var}}(N\mid x_p)\!\approx\!13$, the smallest per-prompt variance observed across all model-dataset combinations. All densities concentrate at $\widehat{\mathrm{Var}}(N\mid x_p)\!\gg\!0$, empirically supporting the bounded-away-from-zero assumption. Instruct variants consistently shift to smaller values, indicating that fine-tuning produces tighter length distributions.}
    \label{fig:length_var}
\end{figure}

\section{Discussions}

\subsection{How GenPPL and BF capture different notions of diversity: an essay analogy case study}\label{app:example}
We illustrate the difference between GenPPL and BF through an essay--word analogy, where trajectories correspond to essays and tokens correspond to individual words. For a fixed prompt $x$, suppose a model generates two types of essays. With probability $1/2$, it produces a short essay of length $N=2$, where each word is selected uniformly from $100$ plausible words. With probability $1/2$, it produces a long essay of length $N=10$, where each word is selected uniformly from only $2$ plausible words. For the short essays,
\(
H(Y_{1:2}\mid N=2,x)
=
2\log 100,
r_2=\log 100.
\) 
For the long essays,
\(
H(Y_{1:10}\mid N=10,x)
=
10\log 2,
r_{10}=\log 2.
\)

BF treats each \emph{trajectory} equally:
\(
\log \mathrm{BF}
=
\frac12 \log 100
+
\frac12 \log 2,
\)
which yields
\(
\mathrm{BF}\approx 14.
\)
This means that a typical generated trajectory behaves as if each token has roughly $14$ plausible continuations. Because BF weights trajectories equally, short but highly diverse generations contribute substantially even if they contain relatively few tokens overall.
In contrast, GenPPL treats each \emph{token} equally. Since most generated tokens come from the long low-diversity essays, we obtain
\(
\mathrm{GenPPL}\approx 3.
\) Thus, a   token on average has only about $3$ plausible continuations.

The distinction arises from the averaging mechanism. BF averages uncertainty at the trajectory level, while GenPPL averages at the token level and therefore assigns greater weight to longer generations. Consequently, BF reflects the diversity of a \emph{typical trajectory}, whereas GenPPL reflects the uncertainty of a \emph{typical token}. These two measures therefore provide complementary views of generation space that cannot be captured by a single metric alone. 

\subsection{When to use BF vs.\ GenPPL across different tasks.}\label{app:bf_genppl_usage}
We refer readers to \autoref{tab:genppl_vs_bf} for a concise practical comparison of when to use BF versus GenPPL across different tasks.

\begin{table}[h]
\centering
\caption{When to use BF vs.\ GenPPL across different tasks.}
\label{tab:genppl_vs_bf}
\begin{tabular}{
    p{2.3cm}
    p{2.5cm}
    p{1.6cm}
    p{5.7cm}
}
\toprule
\textbf{Task} & \textbf{Preferred Metric} & \textbf{Level} & \textbf{Reason} \\
\midrule

MMLU (QA)
& GenPPL
& Token
& Requires selecting a single correct answer. Performance depends on calibrated token probabilities. Diversity across full trajectories is unnecessary. \\

\makecell[l]{Story\\ Generation}
& BF
& Trajectory
& Outputs are evaluated as complete units. Diversity across stories reflects creativity and coverage of different narrative possibilities. \\

Math
& \makecell[l]{GenPPL \\ BF}
& \makecell[l]{Token \\ Trajectory}
& GenPPL is primary because correctness depends on precise step-by-step reasoning. BF is secondary and useful for analyzing diversity across alternative solution strategies. \\

Coding
& \makecell[l]{GenPPL \\ BF}
& \makecell[l]{Token \\ Trajectory}
& GenPPL is primary because code correctness requires token-level precision. BF is secondary and useful for comparing multiple valid implementations. \\

\makecell[l]{Sentence\\Completion}
& GenPPL
& Token
& This is a next-token prediction task, so GenPPL aligns directly with likelihood and perplexity-based evaluation. \\
\bottomrule
\end{tabular}
\end{table}

\subsection{Comparison with perplexity.}\label{app:comparison_perplexity}
Perplexity \citep{jelinek1977perplexity} is a widely used evaluation metric in the LLM literature. To avoid confusion, we briefly contrast it with GenPPL and BF. Perplexity is defined on a fixed dataset $(y_1, \ldots, y_T)$, where $y_t \in \mathcal V$ is the observed token and $y_{<t}$ denotes its context. Given the model distribution $q(\cdot \mid y_{<t})$, it is defined as $\mathrm{PPL} = \exp\left(\frac{1}{T}\sum_{t=1}^T -\log q(y_t \mid y_{<t})\right),$
where $-\log q(y_t \mid y_{<t})$ measures how surprising the true token is under the model. Thus, perplexity evaluates predictive accuracy, with lower values indicating better fit to the data. In contrast, GenPPL and BF are defined with respect to the model’s generative distribution and measure generation capacity. Namely, the size and diversity of the space of possible outputs rather than accuracy on the observed data.

\section{Experiment details}\label{sec:experiment_details}
In this section, we provide additional experimental and implementation details underlying our empirical analysis. In particular, we discuss the stopping behavior of autoregressive generation and the resulting truncation bias, as well as the statistical modeling procedures used throughout the paper. We further present detailed outputs from the Beta mixed-effects regression analysis, including coefficient estimates, interaction effects, and dispersion modeling results, together with additional discussions on uncertainty allocation, semantic diversity, and length--entropy dynamics across model families and tasks.

\subsection{Stopping criteria and empirical truncation bias}\label{app:stopping_criteria}
To ensure the generative process remains consistent with the theoretical framework, we include the $\texttt{EOS}$ token in the sampling vocabulary $\mathcal V \cup \{\texttt{EOS}\}$. Crucially, during $\text{top}$-$k$ sampling, a rollout only terminates if $\texttt{EOS}$ is present within the top $k$ most probable tokens and is subsequently sampled.

While our theoretical framework allows for infinite generation length, practical estimation requires a finite truncation point $T_{\max}$. To ensure that our estimator $\widehat{\mathrm{CE}}^*_{\max,P,M}$ remains a consistent proxy for the true Canopy Entropy $\mathrm{CE}^*$, we set a generous token budget of $T_{\max} = 4000$. As shown in Table \ref{tab:truncation_rates}, the empirical truncation rate remains low across all model families and tasks. Notably, fine-tuned models exhibit a 0.00\% truncation rate in almost all settings, confirming that aligned models naturally terminate well within the provided budget. While truncation rates are higher for base models, they remain sufficiently low ($\leq 0.0432$). Overall, these results demonstrate that the truncation bias term $g(T_{\max})$ defined in \autoref{thm:unified_tmstar_prompt} is sufficiently small, ensuring that the observed differences in generation space are not artifacts of the maximum length constraint.

\begin{table}[t]
\centering
\small
\caption{Empirical Truncation Rates. We report the proportion of Monte Carlo rollouts that reached the maximum generation budget ($T_{\max} = 4000$ tokens) without sampling an $\texttt{EOS}$ token. Rates are calculated over $P=100$ prompts with $M=100$ rollouts per prompt.}
\label{tab:truncation_rates}
\begin{tabular}{lcccccccc}
\toprule

\multirow{2}{*}{Model}
& \multicolumn{2}{c}{MATH}
& \multicolumn{2}{c}{STORY}
& \multicolumn{2}{c}{CODING}
& \multicolumn{2}{c}{COMPLETION} \\

\cmidrule(lr){2-3}
\cmidrule(lr){4-5}
\cmidrule(lr){6-7}
\cmidrule(lr){8-9}

& Base & Instruct
& Base & Instruct
& Base & Instruct
& Base & Instruct \\

\midrule

Qwen3-8B
& 0.0073 & 0.0000
& 0.0005 & 0.0000
& 0.0008 & 0.0004
& 0.0313 & 0.0000 \\

Llama-3.1-8B
& 0.0105 & 0.0006
& 0.0325 & 0.0000
& 0.0095 & 0.0000
& 0.0174 & 0.0000 \\

Gemma-3-12B
& 0.0036 & 0.0000
& 0.0066 & 0.0000
& 0.0432 & 0.0000
& 0.0119 & 0.0000 \\

\bottomrule
\end{tabular}%
\end{table}

\subsection{Completion traps}\label{app:completion_traps}
Instead of answering the instruction, base models sometimes treat the prompt as an unfinished text prefix and continue it with a short completion fragment. Here we present some empirical examples where base models fall into such ``completion traps''. Some examples are listed below.

Prompt:
\begin{quote}
Come up with an original and creative solution for the following real-world problem: Clara, a junior pre-med student, is working part-time and taking a 15 hour credit load at school \textit{[...]} Clara is not sure how to solve her problem..
\end{quote}

Qwen3-8B Base:
\begin{quote}
\ttfamily
"In order to entice donors, you would like to take the following approach:"
\end{quote}

Llama-3.1-8B Base:
\begin{quote}
\ttfamily
"How can she decide what steps she should take to solve her problem?"
\end{quote}

Gemma-3-12B Base:
\begin{quote}
\ttfamily
"There will be partial credit for good ideas that might just work (even if they're not perfect). Your solution will be judged on its creativity and how well it works."
\end{quote}

Coding prompts were especially vulnerable to premature termination caused by structural delimiters. Prompts ending with a closing code block delimiter (\verb|```|) frequently produced empty responses from base models, as they likely interpret the closing backticks as an end-of-document signal from the training distribution. To ensure valid evaluations, we manually stripped these trailing delimiters to force the model to generate the intended logic rather than treating the task as already finalized.

\begin{table*}[t]
\centering
\small
\setlength{\tabcolsep}{3.0pt}
\renewcommand{\arraystretch}{1.12}
\caption{
Base--instruct comparison of length--entropy correlation measured by Spearman's $\rho$ and Kendall's $\tau$. Each entry reports estimate $\pm$ standard error, and absolute change is computed as instruct minus base.
All numbers are rounded to two significant digits. Warm colors indicate increases, while cold colors indicate decreases; darker colors correspond to larger magnitude changes.
}
\label{tab:er_len_rho_tau_change}
\resizebox{\textwidth}{!}{%
\begin{tabular}{llcccccc}
\toprule
\multirow{2}{*}{Task} & \multirow{2}{*}{Type}
& \multicolumn{3}{c}{Spearman's $\rho$ correlation}
& \multicolumn{3}{c}{Kendall's $\tau$ correlation} \\
\cmidrule(lr){3-5}
\cmidrule(lr){6-8}
& 
& Qwen3-8B & Llama-3.1-8B & Gemma-3-12B
& Qwen3-8B & Llama-3.1-8B & Gemma-3-12B \\
\midrule

\multirow{3}{*}{MATH}
& Base
& -0.19 $\pm$ 0.023
& -0.56 $\pm$ 0.031
& -0.27 $\pm$ 0.023
& -0.12 $\pm$ 0.015
& -0.36 $\pm$ 0.024
& -0.17 $\pm$ 0.015 \\

& Instruct
& 0.039 $\pm$ 0.032
& 0.13 $\pm$ 0.029
& 0.096 $\pm$ 0.034
& 0.026 $\pm$ 0.020
& 0.082 $\pm$ 0.019
& 0.056 $\pm$ 0.021 \\

& Abs. Change
& \cellcolor{warmlight!80} 0.23 $\pm$ 0.037
& \cellcolor{warmmid!85} 0.68 $\pm$ 0.046
& \cellcolor{warmmid!35} 0.36 $\pm$ 0.045
& \cellcolor{warmlight} 0.15 $\pm$ 0.024
& \cellcolor{warmmid!55} 0.44 $\pm$ 0.033
& \cellcolor{warmlight!90} 0.22 $\pm$ 0.028 \\

\midrule

\multirow{3}{*}{CODING}
& Base
& -0.28 $\pm$ 0.018
& -0.66 $\pm$ 0.014
& -0.40 $\pm$ 0.011
& -0.18 $\pm$ 0.013
& -0.47 $\pm$ 0.012
& -0.26 $\pm$ 0.0079 \\

& Instruct
& 0.49 $\pm$ 0.030
& 0.30 $\pm$ 0.023
& 0.38 $\pm$ 0.032
& 0.32 $\pm$ 0.021
& 0.20 $\pm$ 0.015
& 0.25 $\pm$ 0.022 \\

& Abs. Change
& \cellcolor{warmdark!80} 0.76 $\pm$ 0.035
& \cellcolor{warmdark} 0.97 $\pm$ 0.022
& \cellcolor{warmdark!85} 0.78 $\pm$ 0.033
& \cellcolor{warmmid!75} 0.50 $\pm$ 0.024
& \cellcolor{warmdark!70} 0.67 $\pm$ 0.016
& \cellcolor{warmmid!80} 0.51 $\pm$ 0.023 \\

\midrule

\multirow{3}{*}{SENTENCE COMPLETION}
& Base
& -0.0049 $\pm$ 0.025
& -0.39 $\pm$ 0.019
& -0.32 $\pm$ 0.016
& 0.0026 $\pm$ 0.017
& -0.26 $\pm$ 0.013
& -0.22 $\pm$ 0.011 \\

& Instruct
& 0.52 $\pm$ 0.024
& -0.22 $\pm$ 0.025
& 0.13 $\pm$ 0.035
& 0.36 $\pm$ 0.018
& -0.14 $\pm$ 0.017
& 0.086 $\pm$ 0.021 \\

& Abs. Change
& \cellcolor{warmmid!80} 0.53 $\pm$ 0.033
& \cellcolor{warmlight!70} 0.17 $\pm$ 0.029
& \cellcolor{warmmid!60} 0.46 $\pm$ 0.037
& \cellcolor{warmmid!35} 0.35 $\pm$ 0.023
& \cellcolor{warmlight!55} 0.12 $\pm$ 0.019
& \cellcolor{warmmid!20} 0.30 $\pm$ 0.023 \\

\midrule

\multirow{3}{*}{STORY}
& Base
& -0.36 $\pm$ 0.015
& -0.32 $\pm$ 0.015
& -0.30 $\pm$ 0.014
& -0.25 $\pm$ 0.011
& -0.21 $\pm$ 0.011
& -0.20 $\pm$ 0.0099 \\

& Instruct
& 0.16 $\pm$ 0.021
& -0.12 $\pm$ 0.019
& -0.049 $\pm$ 0.027
& 0.10 $\pm$ 0.014
& -0.081 $\pm$ 0.012
& -0.032 $\pm$ 0.017 \\

& Abs. Change
& \cellcolor{warmmid!80} 0.52 $\pm$ 0.025
& \cellcolor{warmlight!75} 0.20 $\pm$ 0.024
& \cellcolor{warmlight!95} 0.25 $\pm$ 0.032
& \cellcolor{warmmid!35} 0.35 $\pm$ 0.017
& \cellcolor{warmlight!60} 0.13 $\pm$ 0.016
& \cellcolor{warmlight!65} 0.16 $\pm$ 0.021 \\

\bottomrule
\end{tabular}%
}
\end{table*}

\subsection{Bootstrap procedure}\label{app:bootstrap}
All standard errors and confidence intervals reported in \autoref{tab:global_uncertainty}, \autoref{tab:er_len_r_change}, and \autoref{tab:er_len_rho_tau_change} are estimated using a prompt-level cluster bootstrap with $B=2000$ replicates. In each replicate, we sample $P=100$ prompt indices uniformly with replacement from the original set of prompts. All $M=100$ rollouts associated with a sampled prompt are resampled jointly as a single cluster, preserving the within-prompt rollout structure.

Point estimates are always computed on the original, non-resampled data; bootstrap replicates are used only for uncertainty quantification. Reported standard errors correspond to the sample standard deviation of the bootstrap replicates with degree of freedom = $n-1$.

\subsection{Beta mixed-effects model for semantic diversity}\label{app:beta_regression}
The analysis is based on 2400 observations, corresponding to \(4\) tasks, \(3\) model families, and \(2\) model variants (base and instruct), with \(100\) prompts evaluated for each configuration.

\vspace{-5pt}
\begin{table}[h]
\centering
\caption{Full table of conditional mean model estimates for the Beta mixed-effects regression.}
\label{tab:beta_conditional_model}
\begin{tabular}{lrrr}
\toprule
Term & Estimate & Std. Error & $p$-value \\
\midrule
Intercept & -0.87953 & 0.07007 & $< 2\mathrm{e}{-16}$ \\
$R$ & 0.70803 & 0.03340 & $< 2\mathrm{e}{-16}$ \\
Instruct & -0.37650 & 0.08609 & $1.23\mathrm{e}{-05}$ \\
$\mathrm{ns}(\mathrm{invN},4)_1$ & 0.15197 & 0.04085 & $0.000199$ \\
$\mathrm{ns}(\mathrm{invN},4)_2$ & 0.39690 & 0.23890 & $0.096641$ \\
$\mathrm{ns}(\mathrm{invN},4)_3$ & 0.62065 & 1.47024 & $0.672924$ \\
$\mathrm{ns}(\mathrm{invN},4)_4$ & 0.61022 & 3.10032 & $0.843966$ \\
Coding & -0.03721 & 0.03640 & $0.306726$ \\
Math & 0.16948 & 0.03448 & $8.83\mathrm{e}{-07}$ \\
Stories & -0.27563 & 0.02864 & $< 2\mathrm{e}{-16}$ \\
$R \times \mathrm{Instruct}$ & 1.18254 & 0.04642 & $< 2\mathrm{e}{-16}$ \\
Instruct $\times \mathrm{ns}(\mathrm{invN},4)_1$ & -0.13029 & 0.07868 & $0.097734$ \\
Instruct $\times \mathrm{ns}(\mathrm{invN},4)_2$ & 1.35321 & 0.31701 & $1.97\mathrm{e}{-05}$ \\
Instruct $\times \mathrm{ns}(\mathrm{invN},4)_3$ & 1.00650 & 1.48303 & $0.497340$ \\
Instruct $\times \mathrm{ns}(\mathrm{invN},4)_4$ & 0.81421 & 3.10203 & $0.792954$ \\
\bottomrule
\end{tabular}
\end{table}

\textbf{Conditional mean modeling.} Formally, we model
\(
D_{tpm}
\sim
\operatorname{Beta}
\!\left(
\mu_{tpm}\phi_{tpm},
(1-\mu_{tpm})\phi_{tpm}
\right),
\)
where \(\mu_{tpm}\in(0,1)\) denotes the conditional mean and \(\phi_{tpm}>0\) is the precision parameter. The conditional mean model is specified as
\(\operatorname{logit}(\mu_{tpm})=
\alpha+\beta_1 R_{tpm}
+\beta_2 \mathbf{1}\{m=\mathrm{FT}\}+
f\left(\mathrm{invN}_{tpm}\right)
+
\sum_k
\tau_k \mathbf{1}\{t=k\}
+
\beta_3
(
R_{tpm}
\cdot
\mathbf{1}\{m=\mathrm{FT}\}
)
+
g\left(\mathrm{invN}_{tpm}\right)
\mathbf{1}\{m=\mathrm{FT}\}
+
u_p
+
v_m
\)
where \(R_{tpm}\) is the entropy rate, \(\mathrm{invN}_{tpm}\) is the inverse sequence length, and \(f(\cdot)\), \(g(\cdot)\) are natural spline functions corresponding to nonlinear length effects and their interaction with instruction tuning. The coefficients \(\tau_k\) represent fixed task effects. To account for hierarchical dependence, we include random intercepts
\(
u_p \sim \mathcal N(0,\sigma_p^2),
v_m \sim \mathcal N(0,\sigma_m^2),
\)
for prompt identity and model family, respectively.

In addition to the finding presented in \autoref{sec:results}, we also observe strong nonlinear dependence on inverse sequence length. Several spline terms and their interactions with fine-tuning are significant, confirming that the relationship between diversity and length is highly nonlinear and differs between base and fine-tuned models. This reinforces that simple linear controls for length are insufficient.

Task-level effects are also pronounced. Relative to completion, mathematical reasoning exhibits significantly higher semantic diversity ($+0.169$, $p < 10^{-6}$), while story generation shows substantially lower embedding-based diversity ($-0.276$, $p < 2\times 10^{-16}$), despite appearing lexically diverse. This suggests that open-ended generation can remain semantically concentrated even when surface variation is high (see \autoref{tab:beta_conditional_model}).

Finally, random effects indicate that prompt-level variability remains larger than model-level variability, highlighting the strong influence of prompt semantics on generation diversity (see \autoref{tab:random_effects}).


{\setlength{\textfloatsep}{2pt plus 1pt minus 1pt}
\begin{table}[t]
    \centering
    \caption{Estimated random intercept variances...}
    \label{tab:random_effects}
    \begin{tabular}{llrr}
    \toprule
    Random effect & Group & Variance & Std. Dev. \\
    \midrule
    Intercept & prompt\_uid & 0.013937 & 0.11806 \\
    Intercept & model\_name & 0.006817 & 0.08256 \\
    \bottomrule
\end{tabular}
\end{table}}

\begin{table}[h]
\centering
\caption{Dispersion model estimates for the Beta mixed-effects regression. The dispersion submodel uses a log link for the precision parameter $\phi_{tpm}$.}
\label{tab:beta_dispersion_model}
\begin{tabular}{lrrr}
\toprule
Term & Estimate & Std. Error & $p$-value \\
\midrule
Intercept & 2.7544 & 0.2328 & $< 2\mathrm{e}{-16}$ \\
Instruct & 1.4153 & 0.2452 & $7.83\mathrm{e}{-09}$ \\
Coding & 2.5511 & 0.2186 & $< 2\mathrm{e}{-16}$ \\
Math & 2.1418 & 0.2149 & $< 2\mathrm{e}{-16}$ \\
Stories & 1.6534 & 0.1479 & $< 2\mathrm{e}{-16}$ \\
$\mathrm{ns}(\mathrm{invN},3)_1$ & -3.3780 & 0.5323 & $2.22\mathrm{e}{-10}$ \\
$\mathrm{ns}(\mathrm{invN},3)_2$ & -2.0039 & 0.8870 & $0.0239$ \\
$\mathrm{ns}(\mathrm{invN},3)_3$ & 2.6200 & 1.8507 & $0.1569$ \\
$R$ & 0.9751 & 0.1290 & $4.12\mathrm{e}{-14}$ \\
LLaMA-3.1-8B & 1.0091 & 0.1162 & $< 2\mathrm{e}{-16}$ \\
Qwen3-8B & 0.7270 & 0.1151 & $2.71\mathrm{e}{-10}$ \\
Instruct $\times$ Coding & -2.1969 & 0.2852 & $1.32\mathrm{e}{-14}$ \\
Instruct $\times$ Math & -1.5440 & 0.2861 & $6.76\mathrm{e}{-08}$ \\
Instruct $\times$ Stories & -2.9273 & 0.2085 & $< 2\mathrm{e}{-16}$ \\
Instruct $\times R$ & -1.6085 & 0.2047 & $3.96\mathrm{e}{-15}$ \\
\bottomrule
\end{tabular}
\end{table}

\textbf{Dispersion modeling.} The dispersion model specifies how the precision parameter $\phi_{tpm}$ varies across model variants, tasks, sequence length, entropy rate, and model family. Specifically, we use the log-link specification
\(
\log(\phi_{tpm})
=
\gamma_0
+
\gamma_1 \mathbf{1}\{m=\mathrm{FT}\}
+
\sum_k \delta_k \mathbf{1}\{t=k\}
+
h\!\left(\mathrm{invN}_{tpm}\right)
+
\gamma_2 R_{tpm}
+
\sum_j \eta_j \mathbf{1}\{\mathrm{model}=j\}
+
\sum_k \zeta_k
\mathbf{1}\{m=\mathrm{FT}\}
\mathbf{1}\{t=k\}
+
\gamma_3
\left(
R_{tpm}
\cdot
\mathbf{1}\{m=\mathrm{FT}\}
\right),
\)
where $h(\cdot)$ is a natural spline function capturing nonlinear length-dependent dispersion effects. The task indicators $\delta_k$ allow the residual precision to vary across tasks, while the model-family indicators $\eta_j$ account for systematic differences in dispersion across model families. The interaction terms between model variant and task allow fine-tuning to affect dispersion differently across tasks, and the interaction between entropy rate and model variant tests whether the relationship between entropy rate and precision changes after instruction tuning.

Empirically, the dispersion model reveals substantial heterogeneity in residual precision. Instruction-tuned models have significantly higher baseline precision than base models ($\hat{\gamma}_1=1.415$, $p<0.001$), indicating lower conditional variability after accounting for the mean structure. Precision also differs strongly by task, with coding, math, and story generation all showing significantly higher precision relative to the reference task. The spline terms for inverse length indicate that dispersion varies nonlinearly with output length, while the positive main effect of entropy rate ($\hat{\gamma}_2=0.975$, $p<0.001$) suggests that higher entropy rate is associated with greater precision among base models. However, the negative interaction between instruction tuning and entropy rate ($\hat{\gamma}_3=-1.609$, $p<0.001$) shows that this relationship is substantially weakened or reversed for fine-tuned models. The significant model-variant-by-task interactions further indicate that fine-tuning changes residual variability in a task-dependent manner (see \autoref{tab:beta_dispersion_model}).

\textbf{Residual diagnosis.} The figure (see \autoref{fig:qq_plot}) presents a DHARMa residual diagnostic QQ plot \citep{hartig2016dharma} for the fitted Beta mixed-effects regression model, comparing the empirical residual distribution against the expected uniform distribution. The residuals closely follow the diagonal reference line, indicating good overall model calibration and no substantial systematic deviation from the assumed distribution. The associated diagnostic tests further support model adequacy: the Kolmogorov--Smirnov test ($p=0.551$) suggests no significant deviation from uniformity, the dispersion test ($p=0.068$) indicates no significant over- or under-dispersion, and the outlier test ($p=0.160$) shows no evidence of excessive outliers. Overall, these diagnostics suggest that the fitted model provides an adequate characterization of the observed data and that the regression results are statistically reliable.

\begin{figure}
    \centering
    \includegraphics[width=0.7\linewidth]{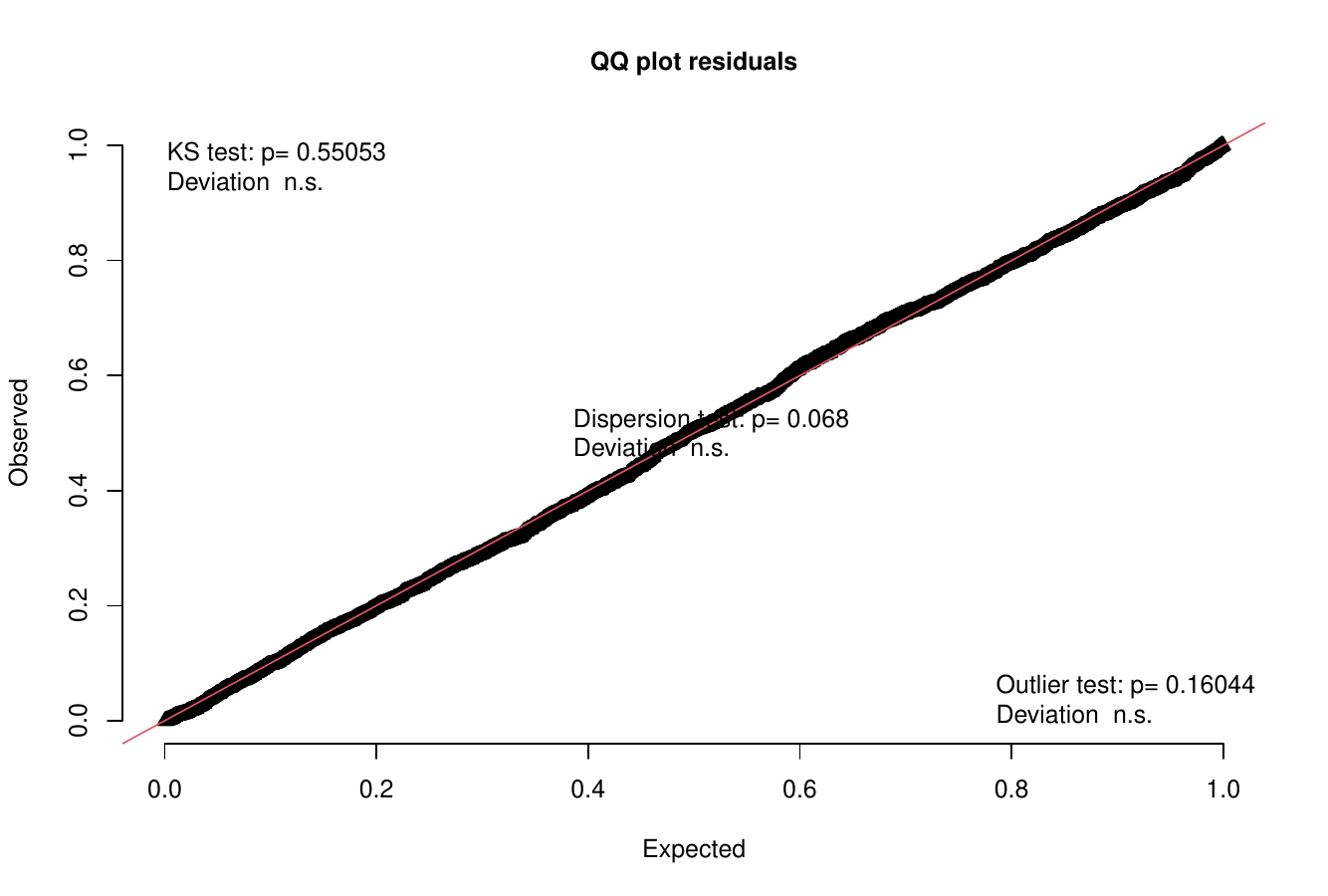}
    \caption{DHARMa residual diagnostics for the fitted Beta mixed-effects regression model. The QQ plot shows no significant deviation from the expected uniform residual distribution.}
    \label{fig:qq_plot}
\end{figure}

\subsection{Compute resources and runtime}\label{GPU}
All experiments were run on a single internal SLURM cluster. Each generation job uses one NVIDIA GPU (A100, H100, or H200), 8 CPU cores, and 100 GB host RAM, with a 6-hour wall-clock timeout per job. Overall, we estimate total project compute at approximately 30-40 GPU-hours, overwhelmingly dominated by rollout generation; metric computation and statistical analysis were negligible by comparison and can be completed on CPU.

\end{document}